\theoremstyle{plain}
\newtheorem{theorem}{Theorem}[section]
\newtheorem{lemma}[theorem]{Lemma}
\newtheorem{corollary}[theorem]{Corollary}
\theoremstyle{definition}
\newtheorem{definition}[theorem]{Definition}
\newtheorem{assumption}[theorem]{Assumption}
\theoremstyle{remark}
\newtheorem*{theorem*}{Statement}
\newcommand{\norm}[1]{\left\lVert#1\right\rVert}
\title{Most Activation Functions Can Win the Lottery Without Excessive Depth}
\author{%
 Rebekka Burholz \\
CISPA Helmholtz Center for Information Security\\ 
66123 Saarbr\"ucken, Germany\\
\texttt{burkholz@cispa.de} \\
}
\begin{document}

\maketitle

\begin{abstract}
The strong lottery ticket hypothesis has highlighted the potential for training deep neural networks by pruning, which has inspired interesting practical and theoretical insights into how neural networks can represent functions. 
For networks with ReLU activation functions, it has been proven that a target network with depth $L$ can be approximated by the subnetwork of a randomly initialized neural network that has double the target's depth $2L$ and is wider by a logarithmic factor. 
We show that a depth $L+1$ network is sufficient. 
This result indicates that we can expect to find lottery tickets at realistic, commonly used depths while only requiring logarithmic overparametrization.
Our novel construction approach applies to a large class of activation functions and is not limited to ReLUs.
Code is available on Github (\href{https://github.com/RelationalML/LT-existence}{RelationalML/LT-existence}).
\end{abstract}

\section{Introduction}
The Lottery Ticket Hypothesis \citep{frankle2019lottery} and, in particular, its strong version \citep{ramanujan2019whats} postulate that pruning deep neural networks might provide a promising alternative to training large, overparameterized neural networks with Stochastic Gradient Descent (SGD).
Pruning has the potential to not only identify small scale neural networks that possess a meaningful, task-specific neural network structure and generalize better due to regularization \citep{LTgeneralization}, but also to reduce the computational burden associated with deep learning \citep{earlybird,evci2019rigging,plastic}. 

Also from a theoretical perspective, it has been shown that for every small enough target network a sufficiently large, randomly initialized neural network, the source network, contains a subnetwork, the lottery ticket (LT), that can approximate the target up to acceptable accuracy with high probability. 
\citep{malach2020proving} has been the first to provide a probabilistic lower bound on the required network width of the larger random network.
This bound has been improved by \citep{pensia2020optimal,orseau2020logarithmic} to a width that is larger than a target's width only by a logarithmic factor. 
The limitations of these works are that they all are restricted to ReLU activation functions and assume that the larger random network has twice the depth of the target network $L_s = 2L_t$. 

However, it is well known that deeper networks tend to have a higher expressiveness, as they can approximate certain function classes with significantly fewer parameters than their shallow counterparts \citep{deepOverShallow,approx}.
It could therefore reduce the overall sparsity of the target and the LT to allow the target to utilize more of the source depth $L_s$ for a sparser representation. %
A lower depth requirment could also 
\citep{plant} have therefore derived lower bounds for the width of the random network of depth $L_s=L_t+1$, but these can only cover extremely sparse target networks, as the width requirement is polynomial in the inverse approximation error $1/\epsilon$. %

In contrast, we derive a novel construction that only requires a logarithmic factor. %
While we utilize subset sum approximation results like \citep{pensia2020optimal}, we allow a target network to have almost the same depth as the source network with depth $L_s \geq L_t+1$ instead of $L_s = 2L_t$.
Our derivations further apply to a large class of activation functions that includes but is not limited to ReLUs. 

The reduced depth requirement and flexibility in the activation function of our construction can result in significantly sparser neural network target networks and thus also LTs.
For example, $f(x)=x^2$ can be approximated up to error $\epsilon$ by a shallow ReLU network with $O(1/\epsilon)$ parameters, while a deep enough network only needs $O(\log(1/\epsilon))$ parameters \citep{deepOverShallow}.
Also the activation function determines the possible target sparsity, e.g., $x^2$ can be represented with only $4$ parameters for all $\epsilon>0$ if $\phi(x) = (\max\{x,0\})^2$.
Therefore, the potential sparsity of the final lottery ticket depends critically on the choice of activation functions and architecture of the source network, including its depth and width.
We provide a theoretical foundation that provides flexibility regarding these choices.

\paragraph{Contributions}
1) We prove the existence of strong lottery tickets as subnetworks of a larger randomly initialized source neural network for a large class of activation functions, including ReLUs.
2) We derive a novel construction that requires the source network to have almost the same depth ($L+1$) as the target network, thus, allowing it to leverage the potential representational benefits associated with larger depth. 
3) Despite the reduced depth requirement we keep the width requirement logarithmic in the approximation error $\epsilon$.
4) Our proofs are constructive and define an algorithm that approximates a given target network by pruning a source network.
We verify in experiments that this algorithm is successful under realistic conditions.

\paragraph{Related Literature}
Many pruning methods have been proposed to reduce the number of neural network parameters during training \citep{braindamage,trimming,IMPfirst,frankle2019lottery,srinivas2016gendropout,orthoRepair,earlybird,rewind,rewindVsFinetune,weightcor,liu2021:finetune,weightelim,LTreg,elasticLTH} or thereafter \citep{sigmoidl0, lecun1990optimal, hassibi1992second, dong2017surgeon, li2017pruneconv, molchanov2017pruneinf,validateManifold} and have been applied in different contexts, including graph neural networks \citep{graphLTH} and GANs \citep{chen2021dataefficient}. 
Furthermore, pruning can have provable regularization and generalization properties \citep{LTgeneralization}. 
The algorithms are most useful for structure learning at lower sparsity levels \citep{sanity,orthoRepair} but at least Iterative Magnitude Pruning (IMP) \citep{IMPfirst,frankle2019lottery} can fail in identifying LTs that perform superior to random or smaller dense networks \citep{sanity2}. 
It can therefore be beneficial to start pruning from a sparse random architecture rather than a dense network \citep{evci2019rigging,plastic}, which saves computational resources.
Other options to achieve the latter are to identify and train on core sets \citep{lessdatamore} and focus on pruning before training \citep{grasp,snip,snipit,synflow,ramanujan2019whats}.
Yet, iterative pruning methods often perform better \citep{frankle2021review,sanity2} but all face challenges in finding highly sparse LTs \citep{plant}. 

Most of the discussed pruning methods try to find LTs in a `weak' (but powerful) sense by identifying a sparse neural network architecture that is well trainable starting from its initial parameters.
Strong LTs are sparse subnetworks that perform well with the initial parameters and, hence, do not need further training \citep{zhou2019deconstructing, ramanujan2019whats}.
Their existence has been proven for fully-connected feed forward networks with \textsc{ReLU} activation functions by providing lower bounds on the width of the large, randomly initialized source network \citep{malach2020proving,pensia2020optimal,orseau2020logarithmic,nonzerobiases,uniExist,ernets}.
In addition, it was shown that multiple candidate tickets exist that are also robust to parameter quantization \citep{multiprize}.
Our first objective is to extend the known theory to other activation functions beyond \textsc{ReLUs}.
Note that the approach that we develop here has also been partially transferred to convolutional and residual architectures \citep{convexist}. 
However, \citep{convexist} does not cover activation functions with nonzero intercept (like sigmoids) and their width dependence on the error $\epsilon$ is less advantageous.

\section{Constructing Lottery Tickets}
Informally, our goal is to show that any deep neural network of width $n_t$ and depth $L_t$ or smaller can be approximated with probability $1-\delta$ up to error $\epsilon$ by pruning a larger randomly initialized neural network of width $O(n_t \log[n_t L_t/\min\{\delta,\epsilon\}])$ and depth $2L_t$ or a network of smaller depth $L_t+1$ and width $O(n_t \log[n_t L_t \log(1/\delta)/\min\{\delta,\epsilon\}])$, as long as the target network and the large random network rely on the same activation functions.
Note that we often omit the dependence on $\delta$ in our discussions because we usually care about cases when $\epsilon < \delta$. 
Next we explain the required background to formalize and prove our claims.

\paragraph{Background and Notation}
Let a fully-connected feed forward neural network $f: \mathcal{D} \subset \mathbb{R}^{n_0} \rightarrow \mathbb{R}^{n_L}$ be defined on a compact domain $\mathcal{D}$ and have architecture $\bar{n} = [n_0, n_1, ..., n_L]$, i.e., depth $L$ and width $n_l$ in layer $l \in (0, ..., L)$, with continuous activation function $\phi(x)$. 
On the relevant compact domain let $\phi(x)$ have Lipschitz constant $T$.
It maps an input vector $\bm{x}^{(0)}$ to neurons $x^{(l)}_i$ as $\bm{x}^{(l)} = \phi\left(\bm{h}^{(l)} \right)$ with $\bm{h}^{(l)} = \bm{W}^{(l)} \bm{x}^{(l-1)} + \bm{b}^{(l)}$,
where $\bm{h}^{(l)}$ is the pre-activation, $\bm{W}^{(l)} \in \mathbb{R}^{n_{l} \times n_{l-1}}$ is the weight matrix, and $\bm{b}^{(l)} \in \mathbb{R}^{n_l}$ is the bias vector of layer $l$.
Without loss of generality, let us assume that each parameter (weight or bias) $\theta$ is bounded as $|\theta| \leq 1$. 
Primarily, we distinguish three networks.
First, we want to approximate a target network $f_t$ with architecture $\bar{n_t}$ of depth $L_t$ consisting of $N_t$ total nonzero parameters.
Second, this approximation is performed by a lottery ticket (LT) $f_{\epsilon}$ that is obtained by pruning a larger source network $f_s$, which we indicate by writing $f_{\epsilon} \subset f_{s}$.
Third, this source network is a larger fully-connected feed-forward, randomly initialized neural network with architecture $\bar{n}_s$ and depth $L_s$.
While most LT existence results require exactly $L_s = 2L_t$, we show that any $L_s \geq L_t+1$ is sufficient.

\begin{wrapfigure}[34]{R}{0.4\textwidth}
\vskip -0.5in
  \begin{center}
    \includegraphics[width=0.4\textwidth]{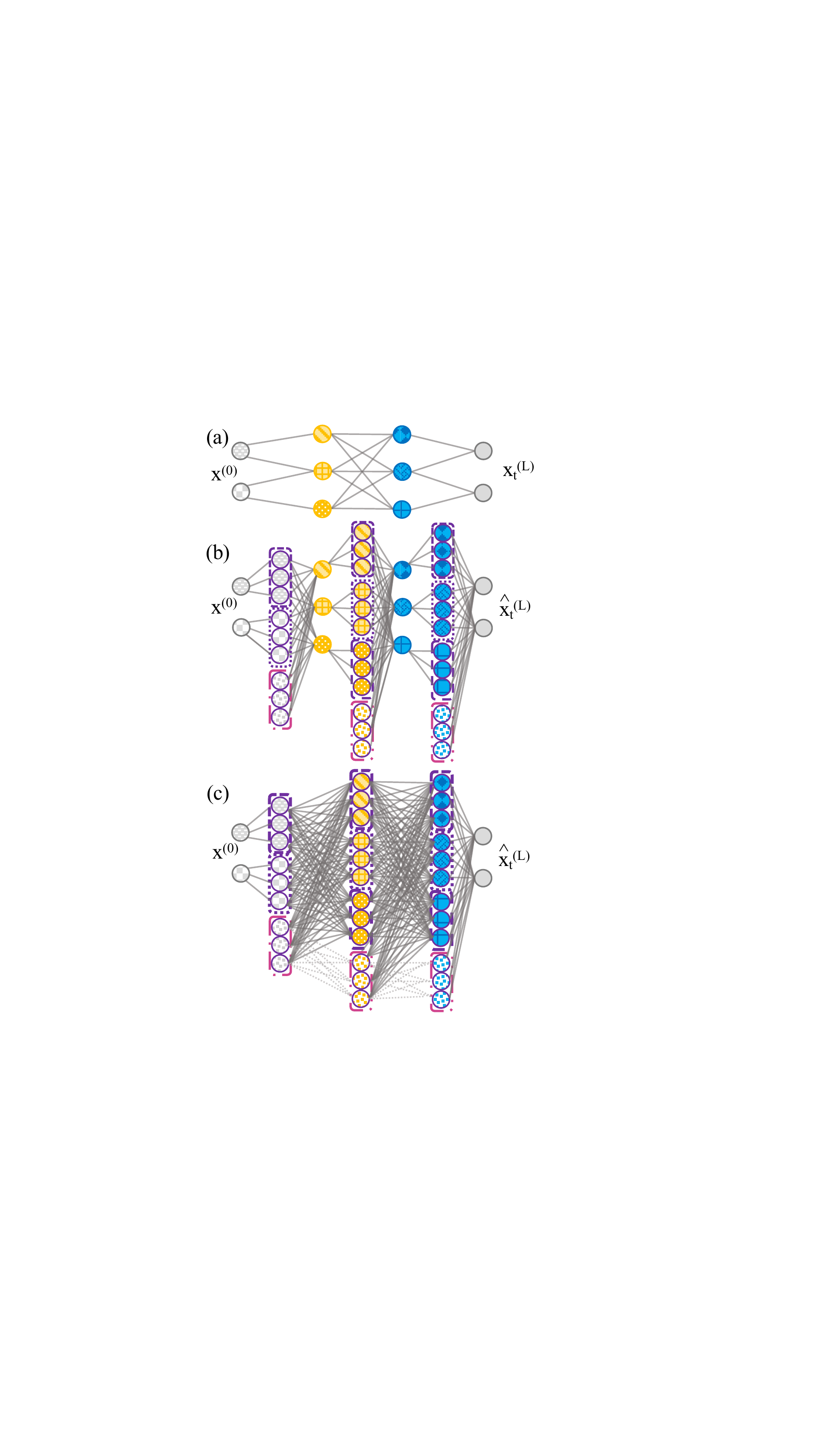}
  \end{center}
\caption{LT construction idea. (a) Target network $f_t$. (b) $L_0 = 2L_t$ construction of $f_{\epsilon}$. (c) $L_0 = L_t+1$ construction of $f_{\epsilon}$. Subset sum blocks are framed (purple corresponding to target neurons, pink to biases). Dashed links only exist if source network biases in layers $l>1$ are initialized to zero.}
\label{fig:construction}
\end{wrapfigure}
To simplify the presentation, like most works, we assume a convenient parameter initialization that we have to choose with respect to the activation function if we approximate a target layer with two source network layers. 
In most cases, we make the following assumption. 
\begin{assumption}[Convenient initialization]\label{def:initconv}
We assume that the parameters of the source network $f_s$ are independently distributed as $w^{(l)}_{ij} \sim  U\left([-1, 1]\right)$, $b^{(1)}_{i} \sim  U\left([-1, 1]\right)$ and $b^{(l)}_{i} = 0$ for $l>1$.
\end{assumption}
Note that also other parameter distributions, e.g. normal distributions, are covered as long as they contain a uniform distribution \citep{pensia2020optimal}, since this allows us to solve subset sum approximation problems (see appendix for a formal definition). 
While most works assume zero biases ($b^{(l)}_{i}=0$), as they focus on target networks without biases \citep{malach2020proving,pensia2020optimal}, we initialize the biases in the first layer as nonzero. 
This is sufficient to approximate nonzero target biases, since we can always construct constant neurons in each layer (see pink blocks in Fig.~\ref{fig:construction}~(c)). 

The convenient parameter initialization can always be transferred to a realistic one by learning or just applying an appropriate layer-specific scaling factor $\lambda_l$, as also proposed in LT pruning experiments \citep{zhou2019deconstructing}. 
For homogeneous activation functions like \textsc{ReLUs}, even a global parameter scaling is sufficient \citep{nonzerobiases}.
For instance, let us assume that our initial parameters in Layer $l$ are distributed as $U\left([-\sigma_l, \sigma_l]\right)$.
A common choice would be a He \citep{HeInit} or Glorot \citep{GlorotInit} initialization with $\sigma_l \propto 1/\sqrt{n_l}$. 
In this case, we would need to adapt our proofs by replacing $\{\theta^{{l}}\}$ of a LT by $\{\lambda_l \theta^{{l}}\}$ with $\lambda_l = 1/\sigma_l$ to construct the same function that we have derived for convenient initializations. 

For specific activation functions with $\phi(0) \neq 0$ (e.g. \textsc{sigmoids}), instead of Assumption~\ref{def:initconv}, we will assume an initialization that has originally been derived to ensure dynamical isometry for \textsc{ReLUs} \citep{dyniso,risotto}.
Interestingly, this initialization also supports LTs. 
\begin{assumption}[Looks-linear initialization]\label{def:initlinear}
We assume that the weight matrices of the source network $f_s$ are initialized as $W^{(l)} =  \left[ {\begin{array}{cc}
  M^{(l)}  & -M^{(l)} \\
   -M^{(l)} & M^{(l)} \\
  \end{array} } \right],$ where $M^{(l)}$ and $b^{(l)}$ are distributed as in Assumptions~\ref{def:initconv}. 
\end{assumption}
In general, we measure the approximation error with respect to the supremum norm, which is defined as $\norm{g}_{\infty} := \sup_{\mathbf{x} \in \mathcal{D}} \norm{g}_1$ for any function $g$ and the L1-norm $\norm{x}_1 := \sum_i |x_i|$. 
In the formulation of theorems, we also make use of the universal constant $C$ that can attain different values. 

\paragraph{Lottery Tickets as Subnetworks}
\citep{malach2020proving} were the first to give probabilistic guarantees and provide a lower bound on the required width of the source network that is polynomial in the width of the target network.
Their $O(n^5_t L^2_t/\epsilon^2)$ requirement, or under additional sparsity assumptions $O(n^2_t L^2_t/\epsilon^2)$, has been improved to a logarithmic dependency of the form $O(n^2_t \log(n_t L_t)/\epsilon)$ for weights that follow an unusual hyperbolic distribution \citep{orseau2020logarithmic} and $O(n_t \log(n_t L_t/\epsilon))$ for uniformly distributed weights \citep{pensia2020optimal}.
While these assume target networks with zero biases, \citep{nonzerobiases} transferred the approach by \citep{pensia2020optimal} to nonzero biases. 
All assume that the source network has exactly twice the depth of the target network ($L_s = 2 L_t$).
Only \citep{plant} prove existence for extremely sparse tickets for $L_s = L_t+1$ but the width requirements are unrealistic for most target architectures.
\citep{uniExist} show how to leverage additional depth $L_s \geq 2 L_t$ but still assume excessive depth in general. 
Furthermore, all of these works focus on \textsc{ReLU} activation functions.
Both limitations, the focus on \textsc{ReLUs} and the $L_s = 2 L_t$ requirement rely on the following construction idea that is also visualized in Fig.~\ref{fig:construction}(a-b).
Every layer of the target network $f_t$ is represented by two layers in the lottery ticket $f_{\epsilon}$ and equipped with \textsc{ReLU} activation functions $\phi_R(x) = \max\{x,0\}$ with the possible exception of the last output layer.
We can obtain two layers by representing the identity as $x = \phi_R(x) - \phi_R(-x)$ and writing each target neuron $x^{(l)}_{t,i}$ as $x^{(l)}_{t,i} = \phi_R\left(\sum_j w^{(l)}_{t,ij} x^{(l-1)}_{t,j} + b^{(l)}_{t,i}\right) = \phi_R\left[\sum_j w^{(l)}_{t,ij} \phi_R\left(x^{(l-1)}_{t,j}\right) - w^{(l)}_{t,ij} \phi_R\left(-x^{(l-1)}_{t,j}\right) + b^{(l)}_{t,i}\right]$. 
The advantage of this 2-layer representation is that (a) we gain the flexibility to select the neurons in the middle layer among a higher number of available neurons in the source network and (b) these neurons are univariate so that they depend on a single nonzero parameter and are thus simple to approximate with high probability.   
The question is how many nodes $n_0$ in the middle layer of the source network are required to guarantee an accurate selection.
The answer distinguishes previous work.
\citep{pensia2020optimal} achieve a factor $\gamma$ (where $n_s = \gamma n_t$) that is logarithmic in $n_s$, $\epsilon$, $\delta$, etc. by solving a separate subset sum approximation problem for each parameter utilizing results by \citep{subsetsum}.
\citep{nonzerobiases} transfers these results to target networks with nonzero biases. 

Our first contribution is to extend a similar construction to a wider class of activation functions that is not restricted to \textsc{ReLUs}.
We need this result to approximate at least the first layer of our target network in our second contribution, an $L+1$-construction, as shown in Fig.~\ref{fig:construction}(c).
For the remaining layers, we propose to construct the subset sum blocks for the next layer directly from the previous layer by sharing nodes in the construction.
This has multiple advantages. 
The obvious one is that we can use the available depth of the source network to start from a potentially sparser target architecture to solve a given problem.
Also the LT itself consists usually of less neurons.
Furthermore, the subset sum approximation of parameters becomes more efficient. 

\paragraph{Subset Sum Approximation}
In the discussed construction, we generally have multiple random neurons and parameters available to approximate a target parameter $z$ by $\widehat{z}$ up to error $\epsilon$ so that $|z-\widehat{z}| \leq \epsilon$. 
Let us denote the independent random variables that we can use for this approximation as $X_1, ..., X_{m}$. 
If they contain uniform distributions, Lueker \citep{subsetsum} has shown that $m \geq C \log(1/\min\{\epsilon, \delta\})$ is sufficient for the existence of a subset $I \subset \{1 ..., m\}$ so that the approximation of $z$ by $\widehat{z} = \sum_{k \in I} X_k$ is successful with probability at least $1-\delta$. 
Standard distributions of interest like uniform and normal distributions as well as their products have this property \citep{pensia2020optimal}.
For convenience, a precise corollary is stated in the appendix as Cor.~\ref{thm:subsetsumExtended}.
Note that $C$ depends on the distributions of the $X_k$ and is usually larger in the two-layers-for-one than in the one-layer-for-one construction, because the $X_k$ are given by products $X_k = w^{(l+1)}_{0,ik} w^{(l)}_{0,kj}$ in the former but $X_k = w^{(l+1)}_{0,ik}$ in the latter case.

\subsection{Two Layers for One}
Our first contribution is to transfer the two-layers-for-one construction to activation functions that can be different from ReLUs.
We will utilize this result also in our one-layer-for-one construction to represent the first layer of the target network.
This is necessary as the input neurons are fixed. 
We need to increase their multiplicity in the first layer of the source network to solve subset sum approximation problems that create target Layer 1 in the LT's Layer 2.

\subsubsection{Activation Functions}
The main property of ReLUs $\phi_R$ that is utilized in LT existence proofs is that we can represent the identity as $x = \phi_R(x)-\phi_R(-x)$.
This identity is exact and holds for all inputs $x \in \mathbb{R}$.
Yet, we can show that, in combination with the right initialization of the source network, it is sufficient if we can approximate the identity on an interval that contains $0$.
This approximation is feasible with most continuous activation functions that are not constant zero in a neighborhood of $0$, as we detail next.
\begin{assumption}[Activation function (first layer)]\label{def:act}
For any given $\epsilon' > 0$ exists a neighborhood $[-a(\epsilon'),a(\epsilon')]$ of $0$ with $a(\epsilon')>0$ so that the activation function $\phi$ can be approximated by $\widehat{\phi}(x)$ on that neighborhood such that $\sup_{x \in {[-a,a]}} | \phi(x) - \widehat{\phi}(x)| \leq \epsilon'$, where $\widehat{\phi}(x) = m_{+} x + d$ for $x \geq 0$ and $\widehat{\phi}(x) = m_{-} x + d$ for $x < 0$ with $m_{+}, m_{-}, d \in \mathbb{R}$, and $m_{+} + m_{-} \neq 0$. 
We further assume that, if $a(x)$ is finite,  $g(x) = x/a(x)$ is invertible on an interval $]0, \epsilon'']$ with $\epsilon''>0$ and $\lim_{x \rightarrow 0} g(x) = 0$.
\end{assumption}
Most continuous functions and thus most popular activation functions fulfill this assumption.
For instance, \textsc{ReLUs} $\phi_R(x)=\max(x,0)$ inflict zero error on $\mathbb{R}$ (i.e., $a = \infty$) with $m_{+} = 1$, $m_{-} = 0$, and $d=0$.
Similarly, \textsc{LReLUs} can be represented without error by $m_{+} = 1$, $m_{-} = \alpha$, and $d=0$ for an $\alpha > 0$.
$\phi(x)=\tanh(x)$ is approximately linear so that $|\tanh(x)-x| \leq x^3/3$ for $|x| < \pi/2$, which can be seen by Taylor expansion of $\tanh$. 
This implies that the choice $m_{+} = 1$, $m_{-} = 1$, and $d=0$ with $a = \min\{ (3 \epsilon')^{1/3}, \pi/2\}$ fulfills our assumption.
\textsc{Sigmoids} $\phi(x) = 1/(1+\exp(-x))$ can be analyzed in the same way with $m_{+} = m_{-} = 0.25$, $d=0.5$, and $a = \min\{ (48 \epsilon')^{1/3}, \pi\}$, since $\phi(x) = (\tanh(x/2)+1)/2$. 
We should point out that not all continuous functions fulfill this property.
Counterexamples include a shifted ReLU $\phi(x) = \phi_R(x-1)$ or $\phi(x) = |x|$.
However, in our $L+1$-construction, almost all activation functions can be arbitrary continuous functions.
We will only ask the activation functions in the first layer to meet our assumption above.
How can we represent the identity with such activation functions?
\begin{lemma}[Representation of the identity]\label{thm:identity}
For any $\epsilon'>0$, for a function $\phi(x)$ that fulfills Assumption~\ref{def:act} with $a=a(\epsilon')>0$, and for every $x \in [-a, a]$ we have
\begin{align}
\left|x - \frac{1}{m_{+} + m_{-}}\left(\phi(x)-\phi(-x)\right)\right| \leq \frac{2\epsilon'}{m_{+} + m_{-}}.
\end{align}
\end{lemma}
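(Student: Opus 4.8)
The plan is to exploit the piecewise-linear surrogate $\widehat{\phi}$ supplied by Assumption~\ref{def:act}, whose odd part is \emph{exactly} linear. Concretely, for $x \ge 0$ one has $\widehat{\phi}(x) = m_{+}x + d$ and, since $-x \le 0$, $\widehat{\phi}(-x) = -m_{-}x + d$, so that $\widehat{\phi}(x) - \widehat{\phi}(-x) = (m_{+}+m_{-})x$. The same identity holds for $x < 0$ by swapping the roles of $x$ and $-x$ (the $m_{-}$ branch now applies at $x$ and the $m_{+}$ branch at $-x$), and it is trivially true at $x = 0$. Hence $\widehat{\phi}(x) - \widehat{\phi}(-x) = (m_{+}+m_{-})x$ for all $x \in \mathbb{R}$, and since $m_{+}+m_{-} \neq 0$ — and we may take $m_{+}+m_{-} > 0$ as in all the examples, matching the sign convention implicit in the statement — dividing by $m_{+}+m_{-}$ recovers the identity exactly from the surrogate.

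Next I would rewrite the quantity to be bounded as the gap between this exact surrogate identity and the true one:
\[
x - \frac{1}{m_{+}+m_{-}}\bigl(\phi(x)-\phi(-x)\bigr) = \frac{1}{m_{+}+m_{-}}\Bigl[\bigl(\widehat{\phi}(x)-\phi(x)\bigr) - \bigl(\widehat{\phi}(-x)-\phi(-x)\bigr)\Bigr].
\]
Then the triangle inequality, together with the fact that $[-a,a]$ is symmetric about $0$ so that $-x \in [-a,a]$ whenever $x \in [-a,a]$, lets me apply the uniform bound $\sup_{y \in [-a,a]}|\phi(y)-\widehat{\phi}(y)| \le \epsilon'$ from Assumption~\ref{def:act} to each of the two terms, giving the claimed $\tfrac{2\epsilon'}{m_{+}+m_{-}}$.

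I do not expect a genuine obstacle here: the only real insight is recognizing that the odd part of $\widehat{\phi}$ is linear with slope $m_{+}+m_{-}$, so that the surrogate reproduces $x$ with zero error and the approximation error enters only through $\phi - \widehat{\phi}$ evaluated at $x$ and at $-x$. The remaining mild points to be careful about are the sign/normalization convention on $m_{+}+m_{-}$ and the use of the domain's symmetry so that the approximation guarantee is available at $-x$ as well as at $x$; neither requires the invertibility hypothesis on $g(x)=x/a(x)$ in Assumption~\ref{def:act}, which will instead be needed later when tuning $\epsilon'$ against the operating interval.
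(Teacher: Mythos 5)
Your proposal is correct and follows essentially the same route as the paper's proof: both exploit that the odd part of the piecewise-linear surrogate is exactly $(m_{+}+m_{-})x$, so the error reduces via the triangle inequality to two applications of the $\epsilon'$-bound from Assumption~\ref{def:act}, at $x$ and at $-x$. Your explicit remarks on the symmetry of $[-a,a]$ and the implicit sign convention $m_{+}+m_{-}>0$ are slightly more careful than the paper's write-up but do not change the argument.
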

Note that \textsc{ReLUs} and \textsc{LReLUs} inflict no approximation error so that the above statement holds also for $\epsilon'=0$ and $a=\infty$. 
Some activation functions can have other advantages over \textsc{ReLUs}.
For instance, functions for which $m_{+} = m_{-} = m$ and $d=0$ like \textsc{tanh} can also be approximated by $|x - \phi(x)/m| \leq \epsilon'/m$ and do not need separate approximations of the positive and the negative part.
Furthermore, \textsc{sigmoids} and general activation functions with $d \neq 0$ do not need nonzero biases in the source network, since we can approximate a bias by random variables of the form $X_k = w_k\phi(0)$. 

The challenge in utilizing this lemma is to incorporate the error that is inflicted by the approximation of the identity in addition to the pruning error into the analysis of the total approximation error. 
The allowed interval size $a$ also influences which variables can enter our subset sum blocks and could affect our width requirement if we would not initialize our parameters in the right way. 

\subsubsection{Lottery Ticket Existence (Two-for-One)}
Our first goal is to identify a LT $f_{\epsilon}$ that approximates a single hidden layer neural network $f_t(x):  \mathcal{D} \subset \mathbb{R}^{n_0} \rightarrow \mathbb{R}$ with $f_{t}(x) = \phi_t\left(\sum^{n_0}_{j=1} w_{t,j} x_j + b_{t}\right)$.  
We could easily extend this result to approximate each layer of a multilayer neural network but we will discuss a more promising alternative that uses the following result for approximating the first layer only.   
$f_{\epsilon}$ can be obtained by pruning a fully-connected source network $f_s$ of depth $L_s = 2$ with $n_{s,1}$ hidden neurons that are equipped with the activation function $\phi_0$, which can be different from the outer $\phi_t$.  
The additional layer in $f_s$ has the purpose to create multiple copies of input nodes that can be used in subset sum approximations.

To achieve this despite the potential non-linear activation function $\phi_0$ in the hidden layer, we have to approximate the identity with the help of $\phi_0$. 
The precise approximation can pose two additional challenges in comparison with the standard construction for \textsc{ReLUs}.
(a) If the piece-wise linear approximation of $\phi_0$ in Assumption~\ref{def:act} holds only on a finite neighborhood of $0$ with $a(\epsilon) < \infty$, we can only use parameters that render the approximation valid. 
Thus, the success of our approach relies on an appropriate parameter initialization in the first layer of the source network $f_s$. 
(b) A nonzero intercept ($d \neq 0$) in the approximation of $\phi_0$ demands for a different initialization scheme to avoid the need for large bias approximations. 
For simplicity, let us first assume that $\phi_0$ fulfills Assumption~\ref{def:act} with zero intercept ($d=0$). 
\begin{theorem}[LT Existence (Two-for-One)]\label{thm:twoconv}
Assume that ${\epsilon', \delta' \in {(0,1)}}$, a target network $f_t(x):  \mathcal{D} \subset \mathbb{R}^{n_0} \rightarrow \mathbb{R}^{n_1}$ with $f_{t,i}(x) = \phi_t\left(\sum^{n_0}_{j=1} w_{t,ij} x_j + b_{t,i}\right)$, and two-layer source network $f_s$ with architecture $[n_{0}, n_{s,1}, n_{1}]$ and activation functions $\phi_0$ in the first and $\phi_t$ in the second layer are given. 
Let $\phi_0$ fulfill Assumption~\ref{def:act} with $a(\epsilon'') > 0$ and $d=0$, $\phi_t$ have Lipschitz constant $T_t$, and $M := \max\{1, \max_{\mathbf{x} \in \mathcal{D}, i} |x_i| \}$. 
Let the parameters of $f_{s}$ be conveniently initialized as $w^{(1)}_{ij}, b^{(1)}_{i} \sim U[-\sigma, \sigma]$ and $w^{(2)}_{ij} \sim U[-1/(|m_{+} + m_{-}|\sigma), 1/(|m_{+} + m_{-}|\sigma)]$, $b^{(2)}_{i} = 0$, where $\sigma = \min\left\{1,a(\epsilon'')/M\right\}$ with $\epsilon'' = g^{-1} \Big( \epsilon'/\Big(C T_t n_0 \frac{ M}{ |m_{+} + m_{-}|} \log\Big(\frac{n_0}{\min\big\{\frac{\delta'}{n_{t,1}} , \frac{\epsilon'}{T_t M}\big\}}\Big)\Big) \Big)$.
Then, with probability at least $1-\delta'$, $f_{s}$ contains a subnetwork $f_{\epsilon'} \subset f_{s}$ so that each output component $i$ is approximated as $\max_{\bm{x}\in \mathcal{D}} \left|f_{t,i}(\bm{x})- f_{\epsilon',i}(\bm{x}) \right| \leq \epsilon'$ %
if %
\begin{align*}
  n_{s,1} \geq  C n_0 \log\left( \frac{n_0 }{\min\{\epsilon'/(T_tM), \delta'/n_{t,1}\} }\right). %
\end{align*}
\end{theorem}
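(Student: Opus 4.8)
The plan is to imitate the classical two-layers-for-one argument, but to replace the exact identity $x=\phi_R(x)-\phi_R(-x)$ by the approximate identity of Lemma~\ref{thm:identity} and to carry the extra error it produces through the analysis. First I would fix the construction. In the hidden layer of $f_s$ I reserve, for each input coordinate $j\in\{1,\dots,n_0\}$, a pool of hidden neurons, split into one sub-pool whose surviving first-layer weight to $x_j$ is positive and one where it is negative; a further small pool is reserved for the biases. A hidden neuron $k$ dedicated to coordinate $j$ is pruned so that only its single incoming weight $v_k:=w^{(1)}_{kj}$ survives and its bias is removed, so it computes $\phi_0(v_k x_j)$; a bias neuron is pruned to drop all incoming weights and computes $\phi_0(b^{(1)}_k)$. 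The choice $\sigma=\min\{1,a(\epsilon'')/M\}$ guarantees $|v_k x_j|\le\sigma M\le a(\epsilon'')$ and $|b^{(1)}_k|\le a(\epsilon'')$ for every $\bm x\in\mathcal D$, so Assumption~\ref{def:act} applies to every surviving hidden neuron: $\phi_0(v_k x_j)=m_{\pm}v_k x_j+r$ with $|r|\le\epsilon''$, the sign being that of $v_k x_j$.

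Next I would reduce the task to subset-sum approximation. Linearising every hidden neuron, the pre-activation of output $i$ in $f_{\epsilon'}$ becomes $\hat h_i(\bm x)=\sum_j s_{ij}(\bm x)\,x_j+(\text{bias term})$, where $s_{ij}=m_+A_{ij}+m_-B_{ij}$ on $\{x_j\ge 0\}$ and $s_{ij}=m_-A_{ij}+m_+B_{ij}$ on $\{x_j<0\}$, with $A_{ij}=\sum_{k\in I^+_{ij}}w^{(2)}_{ik}v_k$ and $B_{ij}=\sum_{k\in I^-_{ij}}w^{(2)}_{ik}v_k$ over subsets chosen inside the two sign sub-pools. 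Forcing $s_{ij}\approx w_{t,ij}$ on both half-lines requires $A_{ij}\approx B_{ij}\approx w_{t,ij}/(m_++m_-)$, i.e.\ two subset-sum approximation problems whose coins are the products $w^{(2)}_{ik}w^{(1)}_{kj}$; since $w^{(1)}_{kj}\sim U[-\sigma,\sigma]$ and $w^{(2)}_{ik}\sim U[-1/(|m_++m_-|\sigma),1/(|m_++m_-|\sigma)]$, these coins are (up to restricting one sign of $w^{(1)}_{kj}$, costing a factor $2$ absorbed in $C$) scaled products of uniforms, hence admissible for Cor.~\ref{thm:subsetsumExtended}, and the normalisation of $w^{(2)}$ places the target $w_{t,ij}/(m_++m_-)$, of modulus at most $1/|m_++m_-|$, in the reachable range. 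Crucially the hidden pools are shared across all outputs $i$: for different $i$ the coins involve the independent weights $w^{(2)}_{ik}$, so the subset-sum instances remain independent and no factor $n_{t,1}$ enters the width. The bias $b_{t,i}$ is reconstructed in the same manner, with coins $w^{(2)}_{ik}\phi_0(b^{(1)}_k)$.

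Then I would account for the error in $|\hat h_i-h_{t,i}|$, which has two sources: (a) the linearisation error of $\phi_0$, at most (number of surviving neurons)$\times\frac{1}{|m_++m_-|\sigma}\times\epsilon''$, since each surviving second-layer weight is bounded by $1/(|m_++m_-|\sigma)$ and the number of surviving neurons is at most $Cn_0\log(\cdots)$ and independent of $\epsilon''$; and (b) the subset-sum residuals, which propagate to at most $Cn_0(|m_+|+|m_-|)M\,\epsilon_{ss}$. Taking the subset-sum accuracy $\epsilon_{ss}$ of order $\epsilon'/(T_tMn_0)$ makes (b) at most $\epsilon'/(2T_t)$, and taking $\epsilon''$ as in the statement — which is exactly $g^{-1}$ of a quantity of order $\epsilon'|m_++m_-|/(T_tMn_0\log(\cdots))$ — makes (a) at most $\epsilon'/(2T_t)$, after using $\epsilon''/\sigma=Mg(\epsilon'')$ in the binding regime $\sigma=a(\epsilon'')/M$. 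Since $\phi_t$ is $T_t$-Lipschitz, $\max_{\bm x\in\mathcal D}|f_{t,i}(\bm x)-f_{\epsilon',i}(\bm x)|\le T_t\max_{\bm x\in\mathcal D}|h_{t,i}(\bm x)-\hat h_i(\bm x)|\le\epsilon'$, the estimates being uniform on $\mathcal D$ thanks to $|x_j|\le M$. Finally, feeding $\epsilon_{ss}$ and the per-instance confidence $\delta_{ss}$ of order $\delta'/(n_0 n_{t,1})$ into the subset-sum bound $m\ge C\log(1/\min\{\epsilon_{ss},\delta_{ss}\})$ (Cor.~\ref{thm:subsetsumExtended}) yields a sub-pool size $C\log\big(n_0/\min\{\epsilon'/(T_tM),\delta'/n_{t,1}\}\big)$, hence $n_{s,1}\ge Cn_0\log(\cdots)$; a union bound over the $O(n_0 n_{t,1})$ subset-sum events and the bias instances controls the failure probability by $\delta'$.

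I expect the main obstacle to be the circular dependence between $\sigma$, which must shrink so that every surviving hidden neuron stays inside the linear regime $[-a(\epsilon''),a(\epsilon'')]$, and the linearisation error budget, which worsens like $1/\sigma$ because the second-layer weights are of order $1/\sigma$. This loop is broken exactly by the invertibility of $g(x)=x/a(x)$ near $0$ from Assumption~\ref{def:act}, and unwinding it produces the stated closed form for $\epsilon''$ (for ReLUs $a=\infty$, source (a) vanishes and $\sigma=1$, recovering the usual construction). The remaining work is bookkeeping: sharing the hidden pools across outputs to avoid an $n_{t,1}$ factor in the width while keeping the subset-sum instances independent, and making every error estimate uniform over the compact domain $\mathcal D$ through the constant $M$.
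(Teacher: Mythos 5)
Your proposal is correct and follows essentially the same route as the paper's proof: prune hidden neurons to univariate form, use the scaled initialization $\sigma=\min\{1,a(\epsilon'')/M\}$ to keep all pre-activations in the linear regime of Assumption~\ref{def:act}, split each coordinate's pool by the sign of $w^{(1)}_{kj}$ into two subset-sum problems over the product coins $w^{(2)}_{ik}w^{(1)}_{kj}$, separate the error into a subset-sum residual and a linearisation term bounded by $|I_j|\,\epsilon''/(|m_++m_-|\sigma)$, and close the loop between $\sigma$ and $\epsilon''$ via the invertibility of $g(x)=x/a(x)$. Your bookkeeping of the per-instance confidence $\delta'/(n_0 n_{t,1})$ and the resulting width bound matches the paper's union-bound argument.
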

\begin{proof}[Proof Outline]
The construction of a LT consists of three steps. %
First, we prune the hidden neurons of $f_s$ to become univariate. %
Second, we identify neurons in the hidden layer for which we can approximate $\phi_0$ for small inputs according to Assumption~\ref{def:act}. 
Third, if $n_{s,1}$ is large enough, we can select subsets $I_j$ and $I_b$ of the hidden neurons with small inputs so that we can use Cor.~\ref{thm:subsetsumExtended} on subset sum approximation to approximate the parameters of the target. %
The resulting subnetwork of $f_s$ is of the following form 
$f_{\epsilon',i}(x) =  \phi_t\Big( \sum_{k \in I} w^{(2)}_{ik} \phi_0\left(w^{(1)}_{kj} x_j \right) + \sum_{k \in I_b} w^{(2)}_{ik} \phi_0\left(b^{(1)}_{k}\right)\Big)$, where $I = \cup_j I_j$ and $\sum_{k \in I_j} w^{(2)}_{ik} w^{(1)}_{kj}$ can be used to approximate $w_{t,ij}$. 
$f_{\epsilon,i}$ qualifies as LT if $|f_{t,i}(x)- f_{\epsilon', i} (x)| \leq \epsilon'$.
A straight-forward series of bounds shows that we can achieve this if 
$\Delta_j = \big| w_{t,ij} x_j - \sum_{k \in I_j}   w^{(2)}_{ik} \phi_0\left(w^{(1)}_{kj} x_j \right) \big| \leq \epsilon'/(T_t(n_0+1))$ for each $j$. 
This also applies to the approximation of $b_{t,i}$, respectively.
We would like to approximate $\phi_0\left(w^{(1)}_{kj} x_j\right) \approx \mu_{\pm}(w^{(1)}_{kj} x_j) w^{(1)}_{kj} x_j \pm \epsilon''$, where $\mu_{\pm}(x) = m_{+} x$ if $x \geq 0$ and $\mu_{\pm}(x) = m_{-} x$ otherwise.
Note that $\mu_{\pm}(x) + \mu_{\pm}(-x) = m_{+} + m_{-}$ for all $x \neq 0$.
By construction, the activation function approximation is valid, as $|w^{(1)}_{kj} x_j| \leq M a(\epsilon'')/M  = a(\epsilon'')$. 
Hence, we can split the error into two subset sum approximation problems and the activation function approximation error: $ \Delta_j \leq 
M \frac{|\mu_{\pm}(x_j)|}{|m_{+} + m_{-}|} \big| w_{t,ij} - |m_{+} + m_{-}| \sum_{k \in I^{+}_j} w^{(2)}_{ik}  w^{(1)}_{kj}  \big| + M \Big(1-\frac{|\mu_{\pm}(x_j)|}{|m_{+} + m_{-}|} \Big)  \big| w_{t,ij}  - |m_{+} + m_{-}| \sum_{k \in I^{-}_j} w^{(2)}_{ik}  w^{(1)}_{kj}  \big|
+ \big|\sum_{k \in I_j}   w^{(2)}_{ik} \big( \phi_0(w^{(1)}_{kj} x_j) - \mu_{\pm}(w^{(1)}_{kj} x_j)  w^{(1)}_{kj} x_j \big) \big|$, where $I_j$ is split into indices $I^{+}_{j}$ for which $w^{(1)}_{kj} > 0$ and $I^{-}_{j}$ all the ones for which $w^{(1)}_{kj} < 0$.
The first two terms can be bounded according to Cor.~\ref{thm:subsetsumExtended} with probability $1-\delta''$
if $C  \log\left( \frac{1}{\min\{ \epsilon'/(2(n_0 +1)M T_t), \delta''\} }\right)$ random variables $X_k$ are available to choose from $I^{\pm}_j$, since the random variables  $X_k = |m_{+} + m_{-}| w^{(2)}_{ik} w^{(1)}_{kj}$ are distributed as $X_k \sim U[-1,1] U[0,1]$ or $X_k \sim U[-1,1] U[-1,0]$ and thus contain a uniform distribution $U[-1,1]$ as shown by \citep{pensia2020optimal}. 
We can solve $2(n_0+1)n_{t,1}$ independent subset sum approximation problems with probability $1-\delta'$ if each problem is solved with $\delta'' = \delta'/(2 (n_0 + 1) n_{t,1})$ and if we have in total $C n_0 \log\Big( \frac{n_0}{\min\{\epsilon'/(M T_t), \delta'/n_{t,1}\}} \Big)$ random variables available.
\end{proof}
The full proof is given in the appendix.
A key result of our construction is that the activation function approximation error does not affect the width requirement. 
As long as we choose the scaling factor $\sigma$ small enough, we can achieve a small enough error. 
Note that for \textsc{ReLUs}, the above theorem reduces to known results \citep{nonzerobiases}, since $\sigma = 1$, $m_{+} + m_{-} = 1$, $a = \infty$, and $d=0$. 
\textsc{LReLUs} have the same advantageous property and are now covered in addition.
The only difference is that $m_{+} + m_{-} = 1 + \alpha$. 
Another new insight is that also activation functions with finite approximation support $a(\epsilon'') < \infty$ support the existence of lottery tickets with the right parameter initialization with $0 < \sigma < 1$.
Interestingly, they can still achieve the same realistic width requirement.

Even though the theorem does not provide details on the universal constant, the minimum width requirement in this construction would be achieved by a linear activation function $\phi_0(x) = x$, which does not require a distinction between positive and negative $w^{(1)}_{kj}$, has $a(\epsilon'') = \infty$, does not inflict any approximation error, and is homogeneous, which makes it easy to transfer the above results to realistic parameter initialization schemes.

Remember that only $\phi_0$ in the first layer needs to fulfill Ass.~\ref{def:act}. 
$\phi_t$ is an arbitrary continuous function.
However, it is relatively uncommon in practice to combine different activation functions in the same neural network so that $\phi_0 \neq \phi_t$.
Regardless, \textsc{ReLU} neural networks have been observed to learn the identity in the first layers (close to the input) \citep{linInit}.
For these reasons it could be beneficial in general to equip at least the first layer with linear activation functions.

Also a 'looks-linear` parameter initialization can be of great benefit, in particular, if the intermediary activation function $\phi_0$ has a nonzero intercept $d\neq 0$, as shown next. %
\begin{theorem}[LT Existence (Two-for-One) with Nonzero Intercept]\label{thm:twononzero}
Thm.~\ref{thm:twoconv} applies also to activation functions $\phi_0$ that fulfill Assumption~\ref{def:act} with $d \neq 0$ if the parameters are initialized according to Assumption~\ref{def:initlinear} with $M^{(l)}_0$ distributed as the weights in Thm.~\ref{thm:twoconv}.
\end{theorem}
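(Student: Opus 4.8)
The plan is to reduce the statement to Theorem~\ref{thm:twoconv} by verifying that, under the looks-linear initialization, the intercept $d$ cancels inside the subset-sum blocks, so that the single place in the proof of Theorem~\ref{thm:twoconv} where $d=0$ was used simply disappears; the same scaling $\sigma$, the same auxiliary accuracy $\epsilon''$, the same width bound, and the same $\epsilon'$-guarantee then go through verbatim, up to the value of the universal constant $C$. Throughout, one first passes to the $2\times 2$ block form of Assumption~\ref{def:initlinear} by appending a sign-flipped (or zero) copy of the input and reading off the output from the corresponding half of the source output layer, which enlarges each layer's width by at most a factor two and does not restrict which target layers $f_t$ can be represented.

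The first step is to locate the dependence on $d$. In the bound on $\Delta_j$ in the proof of Theorem~\ref{thm:twoconv}, a single hidden neuron $k\in I_j$ contributes $w^{(2)}_{ik}\phi_0(w^{(1)}_{kj}x_j)=m_{\pm}\,w^{(2)}_{ik}w^{(1)}_{kj}x_j+d\,w^{(2)}_{ik}+r_k$ to the pre-activation of output $i$, with $|r_k|\le|w^{(2)}_{ik}|\epsilon''$ by Assumption~\ref{def:act}; summing over $I_j$ produces the uncontrolled term $d\sum_{k\in I_j}w^{(2)}_{ik}$, and this is the only obstruction. Every other ingredient — the validity condition $|w^{(1)}_{kj}x_j|\le a(\epsilon'')$ enforced by $\sigma=\min\{1,a(\epsilon'')/M\}$, the choice of $\epsilon''$ through $g^{-1}$, the subset-sum corollary, and the union bound over the $2(n_0+1)n_{t,1}$ problems — is insensitive to the value of $d$.

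The second step removes this term with the block structure, which organizes the hidden neurons into antisymmetric pairs $(k^+,k^-)$ with $w^{(1)}_{k^+j}=-w^{(1)}_{k^-j}$ for every input $j$ and $w^{(2)}_{ik^-}=-w^{(2)}_{ik^+}$ for every output $i$. Since pruning may only keep or delete weights, never rescale them, this exact antisymmetry is available for free. Keeping both members of a pair in $I_j$, with all their first-layer weights and biases pruned except $w^{(1)}_{k^+j}$ and $w^{(1)}_{k^-j}=-w^{(1)}_{k^+j}$, contributes to the pre-activation of $i$ exactly $w^{(2)}_{ik^+}\big(\phi_0(w^{(1)}_{k^+j}x_j)-\phi_0(-w^{(1)}_{k^+j}x_j)\big)$, which by Lemma~\ref{thm:identity} equals $(m_{+}+m_{-})w^{(2)}_{ik^+}w^{(1)}_{k^+j}x_j$ up to error $2|w^{(2)}_{ik^+}|\epsilon''$ whenever $|w^{(1)}_{k^+j}x_j|\le a(\epsilon'')$, which is again guaranteed by $\sigma$. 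The intercept is gone, and as a bonus there is no longer any need to split $I_j$ according to the sign of $w^{(1)}_{kj}$, since the pair already symmetrizes $m_{+}$ and $m_{-}$. Thus each pair plays the role of one hidden neuron in the $d=0$ construction, with associated subset-sum variable $X_k=(m_{+}+m_{-})w^{(2)}_{ik^+}w^{(1)}_{k^+j}$; with the $w^{(2)}$-scaling inherited from Theorem~\ref{thm:twoconv} this is a constant times a product of two symmetric uniform random variables, hence still contains a uniform distribution, so Cor.~\ref{thm:subsetsumExtended} applies unchanged. The doubling of the hidden width (one pair per former neuron) and of the per-term activation error are absorbed into $C$, which recovers $n_{s,1}\ge Cn_0\log\big(n_0/\min\{\epsilon'/(T_tM),\delta'/n_{t,1}\}\big)$ and the claimed accuracy. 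Target biases now need no nonzero first-layer bias at all: a hidden neuron whose entire first layer is pruned outputs the constant $\phi_0(0)=d\neq 0$, so a subset $I_b$ of such neurons contributes $d\sum_{k\in I_b}w^{(2)}_{ik}$ to the pre-activation of $i$, and since $X_k=d\,w^{(2)}_{ik}$ again contains a uniform, Cor.~\ref{thm:subsetsumExtended} matches $b_{t,i}$ with logarithmically many of them. Reassembling the per-coordinate pre-activation errors through the Lipschitz constant $T_t$ of $\phi_t$ and a final union bound over all subset-sum problems then completes the reduction, exactly as in Theorem~\ref{thm:twoconv}.

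I expect the main obstacle to be the bookkeeping concealed in the pairing step: making precise that the antisymmetric pattern of Assumption~\ref{def:initlinear} can be respected simultaneously in both layers of a pruned subnetwork, that appending the doubled input and restricting to the corresponding output half genuinely leaves the representable target class intact, and — quantitatively — that the factor-of-two inflations of the activation error and of the width contribute only a constant to $C$, using $\epsilon''/a(\epsilon'')=g(\epsilon'')$ and the formula $\epsilon''=g^{-1}(\cdot)$ from Theorem~\ref{thm:twoconv}, rather than a factor that grows with $|I_j|$ or $n_0$.
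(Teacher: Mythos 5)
Your proposal is correct and follows essentially the same route as the paper: it isolates the extra term $d\sum_{k\in I_j} w^{(2)}_{ik}$ as the sole obstruction, and cancels it by exploiting the antisymmetric pairing $w^{(1)}_{k'j}=-w^{(1)}_{kj}$, $w^{(2)}_{ik'}=-w^{(2)}_{ik}$ built into the looks-linear initialization, so that the subset-sum, width, and error analysis of Thm.~\ref{thm:twoconv} carry over with only the constant $C$ affected. Your observations that the sign-split of $I_j$ becomes automatic and that biases can be realized from $\phi_0(0)=d$ likewise match the paper's remarks.
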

\begin{proof}[Proof Outline]
We can closely follow the steps of the previous proof.
The major difference is that we approximate $\phi_0\left(x\right) \approx \mu_{\pm}(x) x + d$ so that the activation function approximation produces an additional error term, i.e., $\left| d \sum_{k \in I_j}   w^{(2)}_{ik}\right|$. 
In principle, we could have modified the bias subset sum approximation by approximating $b_{t,i} + d \sum_j \sum_{k \in I_j}   w^{(2)}_{ik}$ instead of $b_{t,i}$.
Yet, $\sum_j \sum_{k \in I_j} w^{(2)}_{ik}$ could be a large number, with which we would need to multiply our width requirement, if each $w^{(2)}_{ik}$ is initialized as in Thm.~\ref{thm:twoconv}.
In contrast, with the looks-linear initialization we can choose $w^{(2)}_{ik'} = - w^{(2)}_{ik}$ so that $\sum_{k \in I_j} w^{(2)}_{ik} = \sum_{k \in I^{+}_j}   w^{(2)}_{ik} + \sum_{k' \in I^{-}_j}   w^{(2)}_{ik'} = \sum_{k \in I^{+}_j}   w^{(2)}_{ik} - \sum_{k \in I^{+}_j}   w^{(2)}_{ik} = 0$.
\end{proof}
As for \textsc{ReLUs} \citep{nonzerobiases}, we could use these results to approximate every layer of a target multi-layer feed-forward neural network for general activation functions by pruning two layers of a source network with double the depth as the target network, as visualized in Fig.~\ref{fig:construction}~(b). 
As alternative, next we propose to prune a source network that has almost the same depth as the target network. 
\subsection{One Layer for One} %
Our second major contribution is to show how we can approximate intermediate target layers by pruning a single layer of the source network.
This is achieved by connecting subset sum approximation blocks directly as visualized in Fig.~\ref{fig:construction}~(c).
The main idea is to create, instead of a single target neuron, $\rho$ copies to support subset sum approximations in the next layer.

In comparison with the two-layer-for-one construction, we have to solve a higher number of subset sum approximation problems but this number is only higher by a logarithmic factor and can be integrated in the universal constant $C$. %
It is usually negligible.
In total, the lottery ticket might also consist of a higher number of parameters, i.e., link weights, but a smaller number of neurons, which is the more critical case to reduce computational costs \citep{molchanov2017pruneinf}.
The benefits usually outweigh the costs, as we need less random variables to guaranty a successful subset sum approximation and, most importantly, can use almost the full depth of the source network to find a sparser representation of the target network.
Furthermore, source networks of lower depth are easier to train and thus also more amenable to pruning algorithms that utilize gradient based algorithms.

Interestingly, we can further relax our requirements on the activation functions, as they only influence the error propagation through the network but are not involved anymore in creating random variables for the subset sum approximation problems.
Let us start with the approximation of a single layer.
\begin{theorem}\label{thm:singleone}
Assume that ${\epsilon', \delta' \in {(0,1)}}$, a target network layer $f_t(x):  \mathcal{D} \subset \mathbb{R}^{n_{t,l}} \rightarrow \mathbb{R}^{n_{t,l+1}}$ with $f_{t,i}(x) = \phi_t\left(\sum^{n_{t,l}}_{j=1} w_{t,ij} x_j + b_{t,i}\right)$, and one source network layer $f_s$ with architecture $[n_{s,l+1}, n_{s,l+2}]$ and the same activation function $\phi_t$ are given.
Let $\phi_t$ have Lipschitz constant $T$, and define $M := \max\{1, \max_{\mathbf{x} \in \mathcal{D}, i} |x_i| \}$. 
Let the parameters of $f_{s}$ be initialized according to Assumption~\ref{def:initconv}. %
Then, with probability at least $1-\delta'$, $f_{s}$ contains a subnetwork $f_{\epsilon'} \subset f_{s}$ so that $\rho$ copies of each output component $i$ are approximated as $\max_{\bm{x}\in \mathcal{D}} \left|f_{t,i}(\bm{x})- f_{\epsilon',i'}(\bm{x}) \right| \leq \epsilon'$ %
if 
\begin{align*}
  n_{s,l+1} \geq  C n_{t,l} \log\left( \frac{n_{t,l} }{\min\{\epsilon'/(MT), \delta'/(\rho n_{t,l+1})\} }\right). %
\end{align*}
\end{theorem}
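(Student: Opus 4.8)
The plan is to follow the three-step recipe of Theorem~\ref{thm:twoconv}, but to route each target coordinate through a \emph{single} source layer instead of two. I would regard the $n_{s,l+1}$ input neurons of the source layer as partitioned into $n_{t,l}+1$ classes: for each $j\in\{1,\dots,n_{t,l}\}$ a class carrying a copy of the target activation $x_j$, and one class carrying a fixed constant neuron of value $c$, of the kind that can always be synthesised in the source network (the pink blocks of Fig.~\ref{fig:construction}(c), built from the nonzero first-layer biases of Assumption~\ref{def:initconv}). Write $\pi(k)\in\{1,\dots,n_{t,l},\star\}$ for the class of input neuron $k$. Because $\phi_t$ is $T$-Lipschitz, it is enough to match pre-activations, so for each output neuron $i'$ that we designate as a copy of target neuron $i$ I would keep a pruning whose surviving indices in the weight row $(w_{s,i'k})_k$ split into disjoint sets $I_{i'j}\subseteq\pi^{-1}(j)$ and $I_{i'\star}\subseteq\pi^{-1}(\star)$, so that $h_{s,i'}(x)=\sum_{j}x_j\sum_{k\in I_{i'j}}w_{s,i'k}+c\sum_{k\in I_{i'\star}}w_{s,i'k}$. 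Setting $\epsilon'':=\epsilon'/\big(TM(n_{t,l}+1)\big)$ and $\Delta_{i'j}:=\big|w_{t,ij}-\sum_{k\in I_{i'j}}w_{s,i'k}\big|$, $\Delta_{i'\star}:=\big|b_{t,i}-c\sum_{k\in I_{i'\star}}w_{s,i'k}\big|$, the telescoping bound
\begin{align*}
\big|f_{t,i}(x)-f_{\epsilon',i'}(x)\big|\ \le\ T\big|h_{t,i}(x)-h_{s,i'}(x)\big|\ \le\ T\Big(\sum_{j=1}^{n_{t,l}}|x_j|\,\Delta_{i'j}+\Delta_{i'\star}\Big)\ \le\ T(n_{t,l}+1)M\,\epsilon''\ =\ \epsilon'
\end{align*}
holds uniformly in $x\in\mathcal D$ (the $\Delta$'s do not depend on $x$ and $|x_j|\le M$), provided each residual $\Delta_{i'j},\Delta_{i'\star}$ is at most $\epsilon''$.

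Each residual for a fixed $i'$ is the error of a subset-sum approximation over an \emph{independent} block of weights, $\{w_{s,i'k}:\pi(k)=j\}$, which under Assumption~\ref{def:initconv} are i.i.d.\ $U[-1,1]$ (for the constant class the surviving weights form a scaled uniform block, and the corresponding $\phi_t$-dependent factor is absorbed into $C$); the targets $w_{t,ij},b_{t,i}$ are fixed with modulus at most $1$, so Corollary~\ref{thm:subsetsumExtended} produces a suitable subset once the block has at least $C_0\log\big(1/\min\{\epsilon'',\delta''\}\big)$ elements, failing with probability at most $\delta''$. The crucial observation is that distinct output copies $i'$ reuse the \emph{same} input neurons but carry statistically independent weight rows, so the input width $n_{s,l+1}$ need not scale with the number $\rho n_{t,l+1}$ of output copies, only the union bound does; overlapping subsets across different $i'$ are harmless because a neuron may fan out. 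Taking $\delta''=\delta'/\big((\rho n_{t,l+1}+O(1))(n_{t,l}+1)\big)$ (the $O(1)$ accounting for the few outputs reserved as constant neurons for the next layer) and a union bound over all pairs $(i',j)$, the construction succeeds with probability at least $1-\delta'$ as soon as every class has at least $C_0\log\big(1/\min\{\epsilon'',\delta''\}\big)$ members, i.e. $n_{s,l+1}=\sum_j|\pi^{-1}(j)|\ge(n_{t,l}+1)C_0\log\big(1/\min\{\epsilon'',\delta''\}\big)$, which after folding logarithmic factors into $C$ is exactly the stated $C\,n_{t,l}\log\big(n_{t,l}/\min\{\epsilon'/(MT),\delta'/(\rho n_{t,l+1})\}\big)$.

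The telescoping estimate and the union bound are routine. Two steps need care: first, exhibiting the constant neuron and checking that approximating $b_{t,i}$ through it is again a (scaled) uniform subset-sum problem with a bounded target, which reuses the mechanism already validated in the two-layers-for-one construction; second, the observation that keeping the per-copy weight rows independent is exactly what charges the $\rho n_{t,l+1}$ copies to the failure probability rather than to the width. I expect the latter to be the conceptual crux of the whole $L+1$ construction, since it is what ultimately keeps the overparametrization logarithmic; everything else is bookkeeping. Note that for this single-layer statement the source layer is fed the exact target activations $x$, so no input-copy error enters; the propagation and accumulation of such errors across depth is handled in the multi-layer version that follows.
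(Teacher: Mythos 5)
Your proposal matches the paper's own proof in all essentials: the same form of pruned output $f_{\epsilon',i'}(x)=\phi_t\big(\sum_j x_j\sum_{k\in I_{i'j}}w^{(l+2)}_{i'k}+\sum_{k\in I_{i'b}}w^{(l+2)}_{i'k}\phi_t(c)\big)$, the same per-parameter error budget $\epsilon'/(MT(n_{t,l}+1))$ via Lipschitz continuity, the same union bound over $\rho\, n_{t,l+1}(n_{t,l}+1)$ subset-sum problems with $X_k=w^{(l+2)}_{i'k}\sim U[-1,1]$, and the same key observation that each input block is reused across all output copies so that $\rho$ enters only the failure probability (hence the logarithm) and not the width. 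This is correct and essentially identical to the paper's argument.
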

\begin{proof}[Proof]
For each target input neuron $j$, we assume that we can create multiple copies (with index set $I_j$) that we can use then as basis for subset sum approximations of target weights $w^{(l+1)}_{t,ij}$. 
Similarly, for biases we reserve neurons $I_b$ in the first layer that are pruned so that we have $\phi_t(c)=1$ or another constant. %
Particularly parameter efficient would be $\phi_t(0)=d$.
Thus, our lottery ticket is of the form $f_{\epsilon',i'}(x) = \phi_t\Big(\sum^{n_{t,l}}_{j=1} x_j \sum_{k \in I_{i'j} \subset I_j} w^{(l+2)}_{i'k} + \sum_{k \in I_{i'b} \subset I_b} w^{(l+2)}_{i'k} \phi_t(c) $.
The subsets $I_{i'j}$ and $I_{i'b}$ are chosen so that 
$|w^{(l+1)}_{t,ij}- \sum_{k \in I_{i'j} \subset I_{j}} w^{(l+2)}_{i'k} | \leq \epsilon'/{(M T(n_{t,l}+1))}$ and $|b^{(l+1)}_{t,i}- \sum_{k \in I_{i'b} \subset I_{b}} w^{(l+2)}_{i'k} \phi_t(b^{(l+1)}_{k} | \leq \epsilon'/{T(n_{t,l}+1)}$, which can be achieved by subset sum approximation based on the random variables $X_k =  w^{(l+2)}_{i'k} \sim U[-1,1]$. 
In total, we have to solve maximally $\rho n_{t,l+1} (n_{t,l}+1)$ of these problems and can thus spend $\delta'/(\rho n_{t,l+1} (n_{t,l}+1))$ on each of them.
Note that we only need a separate subset sum block for each input target neuron and not for its $\rho$ copies, as we can reuse each block in the construction of each output. %
\end{proof}
The advantage the construction above is that we can skip the approximation of the identity function. 
The real challenge, however, lies in the derivation of the required multiplicative factor $\rho$ for approximating a multi-layer target network.

\subsection{Multi-layer Lottery Ticket Existence}
Our main result is to derive realistically achievable lower bounds on the width of a source network that has depth $L_s = L_t+1$. %
To prove the existence of LTs that approximate general multi-layer target networks using the last theorems, we have to overcome two challenges.
First, when we consider multiple network layers, we need to understand how much error we can afford to make in each target parameter approximation for general activation functions beyond \textsc{ReLUs} to stay below the global error bound, as error can get amplified when signal is sent through multiple layers. 
Second, we need to identify the required layer-wise width scaling factor $\rho$ in Thm.~\ref{thm:singleone}.
\begin{lemma}[Error propagation]\label{thm:errorprop}
Let two networks $f_1$ and $f_2$ of depth $L$ have the same architecture and activation functions with Lipschitz constant $T$.
Define $M_l := \sup_{x \in \mathcal{D}} \norm{\bm{x}^{(l)}}_{1}$. 
Then, for any $\epsilon > 0$ we have $\norm{f_1-f_2}_{\infty} \leq \epsilon$, if every parameter $\theta_1$ of $f_1$ and corresponding $\theta_{2}$ of $f_2$ in layer $l$ fulfils $|\theta_1 - \theta_2| \leq \epsilon_{l}$ for 
\begin{align*}%
\epsilon_l := & \frac{\epsilon}{n_{l} L} \Big[T^{L-l+1} \left(1 + M_{l-1}\right) \left(1+\frac{\epsilon}{L}\right)  \prod^{L-1}_{k=l+1}\left(\norm{W^{(k)}_1}_{\infty} + \frac{\epsilon}{L}\right) \Big]^{-1}.
\end{align*}
\end{lemma}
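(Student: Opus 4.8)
The plan is to control the error layer by layer in the $\ell_1$-norm and then unroll a linear recursion. For a fixed input $\bm{x}\in\mathcal{D}$, write $\bm{x}^{(l)}_1,\bm{x}^{(l)}_2$ for the layer-$l$ activations of the two networks and set $\Delta_l := \sup_{\bm{x}\in\mathcal{D}}\norm{\bm{x}^{(l)}_1-\bm{x}^{(l)}_2}_1$. Since the inputs agree, $\Delta_0=0$, and the target statement $\norm{f_1-f_2}_\infty\le\epsilon$ is exactly $\Delta_L\le\epsilon$.

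I would first establish a one-step bound. Writing $W^{(l)}_2 = W^{(l)}_1 + E^{(l)}_W$ and $\bm{b}^{(l)}_2 = \bm{b}^{(l)}_1 + \bm{e}^{(l)}_b$ with all entries of $E^{(l)}_W,\bm{e}^{(l)}_b$ bounded by $\epsilon_l$ (the hypothesis), and $\bm{x}^{(l-1)}_2 = \bm{x}^{(l-1)}_1 + \bm{\delta}$ with $\norm{\bm{\delta}}_1\le\Delta_{l-1}$, the pre-activation difference is $\bm{h}^{(l)}_2-\bm{h}^{(l)}_1 = W^{(l)}_1\bm{\delta} + E^{(l)}_W\bm{x}^{(l-1)}_1 + E^{(l)}_W\bm{\delta} + \bm{e}^{(l)}_b$. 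Using the triangle inequality, the operator bound $\norm{W^{(l)}_1\bm{v}}_1\le\norm{W^{(l)}_1}_\infty\norm{\bm{v}}_1$ (with $\norm{\cdot}_\infty$ the operator norm that makes this hold), the elementary estimate $\norm{E^{(l)}_W\bm{v}}_1\le n_l\epsilon_l\norm{\bm{v}}_1$, and $\norm{\bm{x}^{(l-1)}_1}_1\le M_{l-1}$, this gives $\norm{\bm{h}^{(l)}_2-\bm{h}^{(l)}_1}_1 \le (\norm{W^{(l)}_1}_\infty+n_l\epsilon_l)\Delta_{l-1} + n_l\epsilon_l(1+M_{l-1})$; since $\phi$ acts componentwise with Lipschitz constant $T$, applying it costs a factor $T$, so
\[
\Delta_l \;\le\; T\big(\norm{W^{(l)}_1}_\infty+n_l\epsilon_l\big)\Delta_{l-1} \;+\; Tn_l\epsilon_l\,(1+M_{l-1}).
\]
Unrolling this linear recursion with $\Delta_0=0$ yields $\Delta_L\le\sum_{l=1}^{L}\big(\prod_{k=l+1}^{L}T(\norm{W^{(k)}_1}_\infty+n_k\epsilon_k)\big)\,Tn_l\epsilon_l(1+M_{l-1})$, so it suffices to force each of the $L$ summands to be at most $\epsilon/L$. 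Peeling the top layer out of the product and bounding each remaining factor $\norm{W^{(k)}_1}_\infty+n_k\epsilon_k\le\norm{W^{(k)}_1}_\infty+\epsilon/L$ (and the top factor by $1+\epsilon/L$, using $|\theta|\le1$), the $l$-th summand is at most $n_l\epsilon_l\cdot T^{L-l+1}(1+M_{l-1})(1+\epsilon/L)\prod_{k=l+1}^{L-1}(\norm{W^{(k)}_1}_\infty+\epsilon/L)$, which equals $\epsilon/L$ precisely for the stated $\epsilon_l$; summing over $l$ gives $\Delta_L\le\epsilon$.

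The main work, and the only delicate point, is the bookkeeping in the last step: one must choose consistently to propagate $\Delta_{l-1}$ through one network's matrices while re-expanding the other's, check that the cross-term $E^{(l)}_W\bm{\delta}$ only inflates the per-layer amplification by the harmless amount $n_l\epsilon_l$, and verify the self-consistency $n_k\epsilon_k\le\epsilon/L$ (which holds in the regime of interest, e.g.\ $T\ge1$) that lets the $\epsilon/L$ slack in each factor absorb the gap between $\norm{W^{(k)}_1}_\infty$ and $\norm{W^{(k)}_2}_\infty$, so the final bound can be written purely in terms of $f_1$'s weights, the activation sizes $M_{l-1}$, and $\epsilon$. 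Everything else — the triangle inequalities, the componentwise Lipschitz estimate, and the closed form of a linear recursion — is routine.
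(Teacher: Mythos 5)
Your proof is correct and follows essentially the same route as the paper's: a layer-wise decomposition of the pre-activation difference into a weight-perturbation term and a propagated-activation term, yielding a linear recursion that is unrolled and then controlled by allocating error $\epsilon/L$ to each layer, with the factors $n_k\epsilon_k \le \epsilon/L$ and $\norm{W^{(k)}_1}_{\infty}+\epsilon/L$ exactly as in the paper. The only differences are cosmetic: you state the recursion bottom-up while the paper expands top-down from the output layer, and your bound of the top ($k=L$) factor by $1+\epsilon/L$ implicitly uses the per-output-component reading of the error (a max-entry bound on the last weight row) rather than the induced $\ell_1$ operator norm --- which is the same convention the paper's own proof adopts.
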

For suitable target networks, this requirement essentially becomes $\epsilon_l = C \epsilon/(n_l L)$.
This is an advantageous estimate in comparison with \citep{convexist}, which derives a smaller allowed error $\epsilon_l \propto  \epsilon/(N_t \prod^L_{s=l} N^{(s)}_t)$ that is anti-proportional to the total number of nonzero parameters $N_t$ and a product over nonzero parameters in the upper layers. 
Our estimate can be used in two-layers-for-one as well as one-layer-for-one LT constructions. 
The latter follows as our main result.
\begin{theorem}[LT existence ($L+1$ construction)]\label{thm:LTexistFull}
Assume that ${\epsilon, \delta \in {(0,1)}}$, a target network $f_t(x): \mathcal{D} \subset \mathbb{R}^{n_0} \rightarrow \mathbb{R}^{n_{L}}$ with architecture $\bar{n}_t$ of depth $L_t$, $N_t$ nonzero parameters, and a source network $f_s$ with architecture $\bar{n}_s$ of depth $L_s = L_t+1$ are given.
Let $\phi_t$ be the activation function of $f_t$ in the layers $l \geq 2$ of $f_s$ with Lipschitz constant $T$, $\phi_0$ be the activation function of the first layer of $f_0$ fulfilling Assumption~\ref{def:act}, and $M := \max\{1, \max_{\mathbf{x} \in \mathcal{D}, l} \norm{\mathbf{x^{(l)}_t}}_1\}$. 
Let the parameters of $f_{s}$ be initialized according to Ass.~\ref{def:initconv} for $l > 2$ and Thm.~\ref{thm:twoconv} or \ref{thm:twononzero} for $l \leq 2$. 
Then, with probability at least $1-\delta$, $f_{s}$ contains a subnetwork $f_{\epsilon} \subset f_{s}$ so that each output component $i$ is approximated as $\max_{\bm{x}\in \mathcal{D}} \left|f_{t,i}(\bm{x})- f_{\epsilon',i}(\bm{x}) \right| \leq \epsilon$ %
if 
\begin{align*}
  n_{s,l+1} \geq  C n_{t,l} \log\left(\frac{1}{\min\{\epsilon_{l+1}, \delta/\rho \} }\right) %
\end{align*}
for $l \geq 1$, where $\epsilon_{l+1}$ is defined by Lemma~\ref{thm:errorprop} and $\rho = C N^{1+\gamma}_t \log(1/\min\{ \min_l \epsilon_{l}, \delta \})$ for any $\gamma > 0$.
Furthermore, we require $n_{s,1} \geq  C n_{t,1} \log\left(\frac{1}{\min\{\epsilon_{l+1}, \delta/\rho \} }\right)$. 
\end{theorem}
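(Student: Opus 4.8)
The plan is to assemble the three building blocks that are already in hand: the two-layers-for-one existence result (Thm.~\ref{thm:twoconv}, or Thm.~\ref{thm:twononzero} when the first-layer activation $\phi_0$ has a nonzero intercept) to realize the first target layer, the one-layer-for-one result (Thm.~\ref{thm:singleone}) to realize each of the remaining $L_t-1$ target layers, and the error-propagation bound (Lemma~\ref{thm:errorprop}) to control how the per-parameter approximation errors accumulate. Concretely, I would assign source neuron-layers $1$ and $2$ to emulate target layer $1$ (so that the fixed input neurons acquire the multiplicities needed to run subset-sum approximations in Layer~$2$), and source neuron-layer $l+1$ to emulate target layer $l$ for $l=2,\dots,L_t$, reusing the $\rho$ copies produced by the previous layer as the subset-sum pools. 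This consumes exactly $2+(L_t-1)=L_t+1=L_s$ source layers, matching Fig.~\ref{fig:construction}(c), and by Assumption the initialization is the one from Thm.~\ref{thm:twoconv}/\ref{thm:twononzero} for $l\le 2$ and Ass.~\ref{def:initconv} thereafter.

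First I would fix the error budget. By Lemma~\ref{thm:errorprop} it suffices that every effective target parameter sitting in layer $l$ of the reconstructed network is matched within $\epsilon_l$; for the first layer this tolerance must additionally absorb the identity-approximation error of $\phi_0$ coming from Lemma~\ref{thm:identity}, which is exactly why the first layer uses the scaled initialization of Thm.~\ref{thm:twoconv}/\ref{thm:twononzero}. The subnetwork $f_\epsilon$ that the construction produces computes, at each of the $\rho$ copies of target neuron $(l,i)$, the value $\phi_t$ of an affine combination whose coefficients are the subset-sum approximations of $w_{t,ij}$ and $b_{t,i}$ furnished by Cor.~\ref{thm:subsetsumExtended}; hence $f_\epsilon$ is pointwise equal to a network with the \emph{target} architecture and activations whose layer-$l$ parameters deviate from those of $f_t$ by at most $\epsilon_l$, so Lemma~\ref{thm:errorprop} delivers $\norm{f_t-f_\epsilon}_\infty\le\epsilon$. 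One has to be mildly careful that distinct copies of the same neuron carry slightly different values; this is handled by taking the virtual target-architecture network to have each layer-$l$ neuron equal to a fixed representative copy and observing that the inter-copy discrepancy is itself below the layer-$l$ tolerance.

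Next I would discharge the layers one at a time. For layer $1$, apply Thm.~\ref{thm:twoconv} (resp.\ Thm.~\ref{thm:twononzero}) with $\epsilon'\leftarrow\epsilon_{l+1}$ and a failure budget $\delta_1$, which after substituting $\delta_1$ yields $n_{s,1}\ge C n_{t,1}\log(1/\min\{\epsilon_{l+1},\delta/\rho\})$. For layers $l=2,\dots,L_t$, apply Thm.~\ref{thm:singleone} with $\epsilon'\leftarrow\epsilon_{l+1}$, multiplicity $\rho$, and failure budget $\delta_l$; this gives $n_{s,l+1}\ge C n_{t,l}\log(1/\min\{\epsilon_{l+1},\delta/\rho\})$ once $\log n_{t,l}$ and the $\rho n_{t,l+1}$ union-bound factor from Thm.~\ref{thm:singleone} are absorbed into the constant and into the $\delta/\rho$ term. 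Choosing a summable split $\delta_l=\delta/L_t$ and taking a union bound over the $L_t$ layer-constructions — equivalently over the at most $O(\rho N_t)$ individual subset-sum problems invoked through Cor.~\ref{thm:subsetsumExtended} — keeps the total failure probability below $\delta$.

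The hard part is pinning down the multiplicity $\rho$. It must be large enough that, simultaneously, (i) the $\rho$ copies of each target neuron suffice as a subset-sum pool for \emph{every} copy of \emph{every} downstream target neuron, where Cor.~\ref{thm:subsetsumExtended} asks for a pool of size $C\log(1/\min\{\epsilon_{l+1},\delta''\})$ with $\delta''$ the per-problem budget, and (ii) this $\delta''$, after the union bound over all $O(\rho N_t)$ problems, is only $\delta/(\rho N_t)$ up to polylogarithmic factors. Substituting (ii) into (i) produces a self-consistent inequality of the form $\rho\ge C(\log(\rho N_t/\delta)+\log(1/\min_l\epsilon_l))$ together with the counting constraint that forces $\rho$ to dominate $N_t$, and the choice $\rho=C N_t^{1+\gamma}\log(1/\min\{\min_l\epsilon_l,\delta\})$ satisfies all of these for every $\gamma>0$ — the extra $N_t^{\gamma}$ being precisely the slack that makes the fixed-point inequality solvable uniformly in $N_t$. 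Everything else is the bookkeeping of pushing the resulting tolerances $\epsilon_l$ and the shared budget $\delta/\rho$ through the two earlier existence theorems.
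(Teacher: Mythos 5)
Your overall architecture is the paper's: Thm.~\ref{thm:twoconv}/\ref{thm:twononzero} for the first target layer, Thm.~\ref{thm:singleone} for each subsequent layer, Lemma~\ref{thm:errorprop} to set the per-layer tolerances $\epsilon_l$, and a union bound over all subset-sum problems with per-problem budget $\delta/\rho$. The step you yourself call the hard part, however --- deriving $\rho$ --- is where the argument has a genuine gap. You conflate two different quantities: the number of copies of each layer-$l$ LT neuron that must be built (which is only the subset-sum pool size $C\log(1/\min\{\epsilon_{l+2},\delta/\rho\})$ demanded by the next layer, and \emph{must} stay logarithmic, since otherwise the source width would have to be at least (multiplicity)$\times n_{t,l}$ and the claimed bound $n_{s,l+1}\ge C n_{t,l}\log(\cdot)$ would fail) and the global union-bound factor $\rho$ (which counts the \emph{total} number of subset-sum problems over the whole network). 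Taking your phrase ``apply Thm.~\ref{thm:singleone} with multiplicity $\rho$'' literally, with $\rho=CN_t^{1+\gamma}\log(\cdot)$, would force widths polynomial in $N_t$. Moreover, the fixed-point inequality you write, $\rho\ge C\bigl(\log(\rho N_t/\delta)+\log(1/\min_l\epsilon_l)\bigr)$, does not force $\rho$ to dominate $N_t$ --- it is already satisfied by $\rho=C\log(N_t/\min\{\delta,\min_l\epsilon_l\})$ --- so the ``counting constraint that forces $\rho$ to dominate $N_t$'' is asserted exactly where it needs to be proved.

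The paper closes this gap with a backward recursion over layers. The output layer requires $\rho'_L=N_L$ subset-sum problems; to supply the pools for layer $l+1$, each of the at most $C\log(1/\min\{\epsilon_{l+1},\delta/\rho'\})$ copies of a used layer-$l$ LT neuron must be rebuilt from layer $l-1$, so each of the $N_l$ nonzero layer-$l$ parameters is approximated that many times, giving $\rho'_l\le C N_l\log(1/\min\{\epsilon_{l+1},\delta/\rho'\})$ while leaving the width requirement unchanged (the extra copies only multiply the number of problems, not the pool size per input neuron). Summing over $l$ yields $\rho'\le C N_t\log(1/\min\{\min_l\epsilon_l,\delta/\rho'\})$; this is the correct self-consistent inequality, it manifestly scales linearly in $N_t$ because there is at least one problem per nonzero parameter and a logarithmic number of repetitions, and it is resolved by $\rho=CN_t^{1+\gamma}\log(1/\min\{\min_l\epsilon_l,\delta\})$ since $CN_t^{\gamma}\ge\log N_t$. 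You would need to supply this counting argument (or an equivalent one) to make your proof complete; the rest of your proposal, including the handling of inter-copy discrepancies and the absorption of the $\log n_{t,l}$ and $\rho n_{t,l+1}$ factors into the constant, is consistent with the paper.
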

The full proof is presented in the appendix.
While the layer-wise constructions are explained by the proofs of Thms.~\ref{thm:twononzero}, \ref{thm:twoconv}, and \ref{thm:singleone}, yet with updated $\epsilon_l$, the main challenge is to determine the increased number of subset sum problems that have to be solved to derive the scaling factor $\rho$.
Note that $\rho$ only enters the logarithm and is negligible. %
By updating $C$, we in fact have the same asymptotic dependence $n_{s,l+1} \geq  C  n_{t,l} \log\left(n_{t,l} L/\min\{\epsilon, \delta \}\right)$ as in the two-layers-for-one construction.

\section{Experiments}
To demonstrate that our theoretical results make realistic claims, we present three sets of experiments that highlight different advantages of the $L+1$-construction and the $2L$-construction. 
In all cases, we emulate our constructive existence proofs by pruning source networks to approximate a given target network. 
All experiments were conducted on a machine with Intel(R) Core(TM) i9-10850K CPU @ 3.60GHz processor and GPU NVIDIA GeForce RTX 3080 Ti.

\paragraph{LeNet on MNIST}
\begin{table}[h]
\caption{LT pruning results on MNIST. Averages and $0.95$ standard confidence intervals are reported for $5$ independent source network initializations. Parameters are counted in packs of 1000.} %
\label{table}
\begin{center}
\begin{small}
\begin{sc}
\begin{tabular}{l|cc|cc|ccr}
\toprule
Constr. & \multicolumn{2}{c}{Target} \vline  &  \multicolumn{2}{c}{$L+1$}   \vline & \multicolumn{2}{c}{$2L$}  \\
\midrule
 & \% Acc. & \# Param. & \% Acc. & \# Param.  & \% Acc. & \# Param. \\
\midrule
 ReLU   & 97.99 & 18.6 & 97.78 $\pm$ 0.05 & 1106.5 $\pm$ 0.9 & 97.96 $\pm$ 0.02 & 119.2 $\pm$ 0.04 \\
 LReLU  &97.88 & 18.6 & 97.63 $\pm$ 0.08 & 1102.4 $\pm$ 0.9 &  97.84 $\pm$ 0.06 & 119.2 $\pm$ 0.1 \\
Tanh  & 98.2 & 18.6 & 98.07 $\pm$ 0.07 & 660.3 $\pm$ 0.4 & 98.14 $\pm$ 0.03 & 67.0 $\pm$ 0.06 \\
Sigmoid & 98.08 & 18.6 & 98.08 $\pm$ 0.02 & 669.7 $\pm$ 0.4 & 98.07 $\pm$ 0.02 & 67.4 $\pm$ 0.05 \\
\bottomrule
\end{tabular}
\end{sc}
\end{small}
\end{center}
\vskip -0.1in
\end{table}

As our proofs suggest, pruning involves solving multiple subset sum approximation problems.
Each is an NP-hard problem in general, as the size of the power set and thus the number of potential solutions scales exponentially in the base set size $m$, i.e., as $2^m$.
However, as a set size of $m=15$ and even smaller is sufficient for our purpose, we could solve each problem optimally by exhaustively evaluating all $2^m$ solutions.
To reduce the size of the associated LTs, we instead identify the smallest subset consisting of up to $10$ variables out of $m=20$ to achieve an approximation error that does not exceed $0.01$.

What should be our target network? 
As the influential work \citep{frankle2019lottery}, we use Iterative Magnitude Pruning (IMP) on LeNet networks with architecture $[784,300,100,10]$ to find LTs that achieve a good performance on the MNIST classification task \citep{mnist}. 
Using the Pytorch implementation of the Gihub repository open\_lth\footnote{https://github.com/facebookresearch/open\_lth} with MIT license, we arrive at a target network for each of four considered activation functions after 12 pruning steps: \textsc{ReLU}, \textsc{LReLU}, \textsc{sigmoid}, and \textsc{tanh}. 
Their performance and number of nonzero parameters are reported in Table~\ref{table} in the target column alongside our results for the $(L+1)$-construction and our $2L$ construction, which achieve a similar performance. 
Note that, while the $L+1$ construction relies in this case on a higher number of parameters, it uses less neurons and a smaller depth, which are the relevant criteria for fast computations and network training.

\paragraph{ResNet18 on Tiny-ImageNet}
We have obtained more large-scale target networks by fine-tuning a ResNet18 model that was originally trained on ImageNet data and available for download on Pytorch for transfer learning. 
We replaced the last fully-connected classification layer by a fully-connected network with widths $[512, 512, 200]$ and activation function of interest in the first layer and trained the full ResNet model on the tiny-ImageNet training data. 
Similarly to experiments of \citep{convexist}, we estimated the pruning error of LTs for the fully-connected classification network based on a statistic of solving $10^5$ different subset sum approximation problems. 
The subset sum base set size in the one-layer-for-one construction is $m=10$, while we used $m=15$ in the two-layer-for-one construction. 
Note that this difference is possible because the subset sum base set consisting of uniformly distributed random variables in the one-layer-for-one construction can be smaller than variables that are products of uniform random variables as in the two-layers-for-one construction. 

Both constructions approximate target parameters up to maximum error $\epsilon_l = 0.001$.
The reported performance is evaluated on the tiny-ImageNet test data for a model that concatenates the residual target layers and a fully-connected LTs (see Table~\ref{table:tinyimagenet}).   
In this case, because of the different base set sizes, we sometimes find the $L+1$-construction to be more parameter efficient. 

\begin{table}[h]
\caption{LT pruning results on tiny-ImageNet. Averages and $0.95$ standard confidence intervals are reported for $3$ independent source network initializations. Parameters are counted in packs of $10^5$.} 
\label{table:tinyimagenet}
\begin{center}
\begin{small}
\begin{sc}
\begin{tabular}{l|cc|cc|ccr}
\toprule
Constr. & \multicolumn{2}{c}{Target} \vline  &  \multicolumn{2}{c}{$L+1$}   \vline & \multicolumn{2}{c}{$2L$}  \\
\midrule
 & \% Acc. & \# Param. & \% Acc. & \# Param.  & \% Acc. & \# Param. \\
\midrule
 ReLU   & 73.08 & 2.06 & 73.09 $\pm$ 0.04  & 19.14 $\pm$ 0.005 &  73.06 $\pm$ 0.05 & 21.8 $\pm$ 0.004 \\
 LReLU  & 73.0 & 2.06 & 72.96 $\pm$ 0.04 & 19.14 $\pm$ 0.01 & 72.92 $\pm$ 0.03 & 21.8 $\pm$ 0.006 \\
Tanh  & 73.73 & 2.06 & 73.75 $\pm$ 0.04 & 11.36 $\pm$ 0.01 & 73.72 $\pm$ 0.08 & 10.98 $\pm$ 0.01 \\
Sigmoid & 72.69 & 2.1 & 72.69 $\pm$ 0.01 & 19.14 $\pm$ 0.002 & 72.67 $\pm$ 0.06 & 21.86 $\pm$ 0.01 \\
\bottomrule
\end{tabular}
\end{sc}
\end{small}
\end{center}
\vskip -0.1in
\end{table}

\paragraph{Representational Benefits of the $L+1$-construction}
In the previous experiments, we have constructed one target network and used different source networks to demonstrate the validity of our proofs and constructions.
The more realistic set-up is, however, that the source network is given and thus the depth of the LT is predetermined as $L_s-1$ or $L_s/2$, respectively. 
As a consequence, the $2L$- and the $L+1$-construction would approximate different target network representations. 
If the target of depth $L_t = L_s-1$ is much sparser than the target network of depth $L_t = L_s/2$, our $L+1$-construction might be more effective than our $2L$-construction. The general challenge with this argument is that we cannot exclude the case that there might exist a much sparser target network of depth $L_t = L_s/2$ than we thought or could identify.

Keeping this caveat in mind, we have still constructed two types of target networks for a problem for which \citep{plant} has derived a relatively sparse solution. 
The circle target of depth $L_t = 25$ (for the $L+1$-construction) consists of $190$ nonzero parameters and has a maximum width of $16$, while the $L_t = 13$ target network (for the $2L$-construction) consists of $2133$ parameters and has a maximum width of $1024$. 
Both target networks are equipped with ReLU activation functions and achieve a similar test accuracy: $99.86\%$ is achieved by the $L_t = 13=L_s/2$ target and $99.76\%$ by the  $L_t = 25 =L_s-1$ target. 

In the following we report the properties of the lottery ticket that we identify by pruning the source network with a subset sum base set size of $m=15$ in the two layer construction and base set size $m=10$ in the one layer construction as averages over 3 independent runs with 0.95 standard significance intervals.

\textbf{$2L$-construction:}  Acc: $99.5 \pm 0.6$, number of parameters: $98393 \pm 288$, maximum width: 15375.

\textbf{$L+1$-construction:} Acc: $99.8 \pm 0.01$, number of parameters: $18891 \pm 133$, maximum width: 170.

In this example, the $L+1$-construction results in a sparser lottery ticket consisting of fewer parameters and considerably smaller maximum width. The reason is that we could leverage almost the full depth of source network to obtain a much sparser target network representation for the $L+1$-construction.

\section{Discussion}

We have shown that randomly initialized fully-connected feed forward neural networks contain lottery tickets with high probability for a wide class of activation functions by deriving two types of constructions: 
(a) The ($2L_t$)-construction assumes that the larger random source network has at least double the depth of the target network and is wider only by a logarithmic factor $O(n_t \log(n_t L_t/\min\{\delta,\epsilon\})$ in the approximation error and the LT existence probability.
(b) The ($L_t+1$)-construction allows for potentially sparser target network representations, as these can utilize almost the full available depth $L_s=L_t+1$ of the source network. Remarkable about this result is that asymptotically, we can maintain the logarithmic overparametrization. 
While this suggests a slightly less advantageous scaling in $\delta$ for the ($L_t+1$)-construction, the constant is smaller for the ($L_t+1$)-construction and most of the time we can choose a similar or smaller source width than in the ($2L_t$)-construction. %
We have also demonstrated in experiments that the LT for the ($L_t=L_s-1$)-construction will often consist of fewer parameters than the LT corresponding to the ($L_s/2$)-construction, but this does not always have to be the case. 

We have mainly discussed the scenario, in which the source network has exactly depth $L_s = L_t+1$, but not all target network representations become sparser with increasing depth.
What if the sparsest target representation has depth $L_t+1 < L_s$?
With the help of Lemma~\ref{thm:identity}, we could simply add layers that approximate the identity and thus construct a target with depth $L_t = L_s-1$. 
In future, it could be interesting to leverage excessive depth to distribute subset sum blocks on multiple layers instead, as it has been proposed for \textsc{ReLUs} \citep{uniExist}.


\newpage
\appendix

\section{Subset Sum Approximation}
In the discussed LT constructions, we generally have multiple random neurons and parameters available to approximate a target parameter $z$ by $\widehat{z}$ up to error $\epsilon$ so that $|z-\widehat{z}| \leq \epsilon$. 
Let us denote these random parameters in the source network as $X_i$.
If these contain a uniform distribution, as defined below, we can utilize a subset of them for approximating $z$.
\begin{definition}\label{def:containUniform}
A random variable $X$ contains a uniform distribution if there exist constants $\alpha \in (0,1]$, $c, h > 0$  and a distribution $G_1$ so that $X$ is distributed as $X \sim \alpha U[c-h,c+h] + (1-\alpha) G_1$.
\end{definition}
\citep{uniExist} extended results by \citep{subsetsum} to solve subset sum approximation problems if the random variables are not necessarily identically distributed. In addition, they also cover the case $|z|>1$.
The general statement follows below.
\begin{corollary}[Subset sum approximation \citep{subsetsum,uniExist}]\label{thm:subsetsumExtended}
Let $X_1, ..., X_{m}$  be independent bounded random variables with $|X_k| \leq B$.
Assume that each $X_k \sim X$ contains a uniform distribution with potentially different $\alpha_k > 0$ (see Definition~\ref{def:containUniform}) and $c=0$. 
Let $\epsilon, \delta \in (0,1)$ and $t \in \mathbb{N}$ with $t \geq 1$ be given. 
Then for any $z \in [-t,t]$ there exists a subset $S \subset [m]$ so that with probability at least $1-\delta$ we have $|z - \sum_{k \in S} X_k| \leq \epsilon$ if
\begin{align*}
    m \geq C\frac{\max\left\{1,\frac{t}{h}\right\}}{\min_k\{\alpha_k\}}\log\left(\frac{B}{\min\left( \frac{\delta}{\max\{1,t/h\}}, \frac{\epsilon}{\max\{t,h\}}\right)} \right).
\end{align*}
\end{corollary}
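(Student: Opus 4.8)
The plan is to reduce to the classical case of i.i.d.\ variables uniform on a unit interval --- for which Lueker's estimate guarantees that $O(\log(1/\min\{\epsilon',\delta'\}))$ samples suffice to $\epsilon'$-approximate every target in $[-1,1]$ with probability $1-\delta'$ --- and then to pay separately for the three deviations from that setting: the non-identical mixing weights $\alpha_k$, the scale parameters $h$ and $B$, and the extended target range $[-t,t]$ with $t\geq 1$. The first step is to peel off the uniform component of each $X_k$. Using the mixture form of Definition~\ref{def:containUniform} (with $c=0$), introduce independent indicators $B_k\sim\mathrm{Bernoulli}(\alpha_k)$ with $X_k\mid\{B_k=1\}\sim U[-h,h]$; conditioned on $(B_k)_k$, the variables $\{X_k: B_k=1\}$ are i.i.d.\ $U[-h,h]$. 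The count of such ``uniform-type'' indices stochastically dominates a $\mathrm{Binomial}(m,\min_k\alpha_k)$, so a Chernoff bound gives at least $m_0:=\lceil\tfrac12 m\min_k\alpha_k\rceil$ of them with probability $\geq 1-\delta/2$ (the assumed lower bound on $m$ makes the exponent $m\min_k\alpha_k$ dominate $\log(1/\delta)$). From here on we only ever put uniform-type indices into $S$, so we may discard the remaining variables and work with $m_0$ i.i.d.\ copies of $U[-h,h]$.

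Rescaling by $h$, the task becomes: approximate $z/h\in[-t/h,t/h]$ to within $\epsilon/h$ using i.i.d.\ $U[-1,1]$ variables. To handle the range, set $G:=\lceil\max\{1,t/h\}\rceil$, split the $m_0$ variables into $G$ groups of size $\lfloor m_0/G\rfloor$, and write $z/h$ as a sum of $G$ reals in $[-1,1]$; approximating the $j$-th piece with the $j$-th group to accuracy $\epsilon/(hG)$ and failure probability $\delta/(2G)$ and summing gives total error $\leq\epsilon/h$ and, after a union bound over the $G$ groups and the event of the previous paragraph, total failure probability $\leq\delta$. Since $hG\asymp\max\{t,h\}$ and $\delta/G\asymp\delta/\max\{1,t/h\}$, Lueker's single-group bound demands group size at least $C\log\big(B/\min\{\delta/\max\{1,t/h\},\ \epsilon/\max\{t,h\}\}\big)$ --- the $B$ entering because the underlying variables are only assumed bounded by $B$ --- hence $m_0\geq C\,G\log(\cdots)$, i.e.\ $m\geq C\,\frac{\max\{1,t/h\}}{\min_k\alpha_k}\log(\cdots)$, which is the claimed bound.

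The step I expect to be the main obstacle is this reduction itself: verifying that throwing away the non-uniform-type variables costs nothing, that after rescaling the per-group error and confidence budgets recombine to exactly $\epsilon$ and $\delta$ (rather than losing extra factors), and that the resulting per-group requirement matches Lueker's statement with the correct dependence on $B$, $h$ and the range $t$ --- so that the final bound comes out with the stated $\max$/$\min$ structure and only a logarithmic, rather than polynomial, dependence on $1/\epsilon$, $1/\delta$ and $t$. Lueker's concentration estimate for i.i.d.\ $U[-1,1]$ is used as a black box; the remaining work is careful bookkeeping, but the bookkeeping is where all the constants are won or lost.
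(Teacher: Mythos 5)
The paper does not actually prove this corollary itself---it is imported from the cited references (Lueker's theorem together with the mixture and extended-range generalizations in \citep{pensia2020optimal} and \citep{uniExist})---so there is no in-paper proof to compare against, only the argument in those works. Your reconstruction (peel off the uniform components via Bernoulli indicators plus a Chernoff bound justified by the assumed lower bound on $m$, rescale by $h$, split the target range $[-t,t]$ into $\max\{1,t/h\}$ unit-size pieces approximated by disjoint groups, and invoke Lueker's i.i.d.\ $U[-1,1]$ estimate per group before a union bound) is essentially that same argument, and your bookkeeping does recombine to the stated $\max$/$\min$ structure.
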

In the two-layers-for-one construction, each $X_k$ is given by a product $X_k = w^{(l+1)}_{0,ik} w^{(l)}_{0,kj}$.
Products of uniform distributions or normal distributions generally contain a uniform distribution \citep{pensia2020optimal}.
However, $\alpha_i < 1$ is smaller for such products. 
In consequence, we can utilize a higher fraction of available parameters in case of a one-layer-for-one construction in comparison with a two-layers-for-one construction.
Yet, this insight only affects the universal constant and is thus of minor influence.

\section{Proofs}
\subsection{Proof of Lemma~\ref{thm:identity}}
\begin{theorem*}[Representation of the identity]
For any $\epsilon'>0$, for a function $\phi(x)$ that fulfills Assumption~\ref{def:act} with $a=a(\epsilon')>0$, and for every $x \in [-a, a]$ we have
\begin{align}
\left|x - \frac{1}{m_{+} + m_{-}}\left(\phi(x)-\phi(-x)\right)\right| \leq \frac{2\epsilon'}{m_{+} + m_{-}}.
\end{align}
\end{theorem*}
\begin{proof}[Proof]
According to Assumption~\ref{def:act}, for any $x \in [-a(\epsilon'), a(\epsilon')]$ we have $|\phi(x) - (\mu_{\pm}(x) x+d)| \leq \epsilon'$, where $\mu_{\pm}(x) = m_{+} x$ if $x \geq 0$ and $\mu_{\pm}(x) = m_{-} x$ otherwise.
It follows that 
\begin{align}\label{eq:id}
\begin{split}
     \left|\phi(x)-\phi(-x) - ( \mu_{\pm}(x) x + d - \mu_{\pm}(-x)x - d)  \right|  \leq & |\phi(x) - \mu_{\pm}(x) x + d) | \\
    & + |\phi(-x) - \mu_{\pm}(-x) x + d| \leq 2 \epsilon'.
\end{split}
\end{align}
Note that $\mu_{\pm}(x) x + d - \mu_{\pm}(-x)x - d = \mu_{\pm}(x) - \mu_{\pm}(-x) =  (m_{+} + m_{-}) x$. 
Thus, dividing Eq.~(\ref{eq:id}) by $(m_{+} + m_{-})$ proves the statement.
\end{proof}

\subsection{Proof of Theorem~\ref{thm:twoconv}}
\begin{theorem*}[LT Existence (Two-for-One)]
Assume that ${\epsilon', \delta' \in {(0,1)}}$, a target network layer $f_t(x):  \mathcal{D} \subset \mathbb{R}^{n_0} \rightarrow \mathbb{R}^{n_1}$ with $f_{t,i}(x) = \phi_t\left(\sum^{n_0}_{j=1} w_{t,ij} x_j + b_{t,i}\right)$, and two source network layers $f_s$ are given with architecture $[n_0, n_{s,1}, n_{1}]$ and activation functions $\phi_0$ in the first and $\phi_t$ in the second layer. 
Let $\phi_0$ fulfill Assumption~\ref{def:act} with $a > 0$ and $d=0$, $\phi_t$ have modulus of continuity $\omega_t$, and $M := \max\{1, \max_{\mathbf{x} \in \mathcal{D}, i} |x_i| \}$. 
Let the parameters of $f_{s}$ be conveniently initialized as $w^{(1)}_{ij}, b^{(1)}_{i} \sim U[-\sigma, \sigma]$ and $w^{(2)}_{ij} \sim U[-1/(|m_{+} + m_{-}|\sigma), 1/(|m_{+} + m_{-}|\sigma)]$, $b^{(2)}_{i} = 0$, where $\sigma = \min\left\{1,a(\epsilon'')/M\right\}$ with $\epsilon'' = g^{-1} \left( \omega^{-1}_t\left(\epsilon'\right)/\left(C n_0 \frac{ M}{ |m_{+} + m_{-}|} \log\left(\frac{n_0}{\min\left\{\delta'/n_{t,1} , \omega^{-1}_t \left(\epsilon'\right)/M\right\}}\right)\right) \right)$.
Then, with probability at least $1-\delta'$, $f_{s}$ contains a subnetwork $f_{\epsilon'} \subset f_{s}$ so that each output component $i$ is approximated as $\max_{\bm{x}\in \mathcal{D}} \left|f_{t,i}(\bm{x})- f_{\epsilon',i}(\bm{x}) \right| \leq \epsilon'$ %
if %
\begin{align*}
  n_{s,1} \geq  C n_0 \log\left( \frac{n_0 }{\min\{ \omega^{-1}_t \left(\epsilon'\right)/M, \delta'/n_{t,1}\} }\right). %
\end{align*}
\end{theorem*}
\begin{proof}[Proof]
The construction of a LT consists of three main steps.
First, we prune the hidden neurons of $f_s$ to univariate form.
Second, we identify neurons in the hidden layer for which we can approximate $\phi_0$ for small inputs according to Assumption~\ref{def:act}. 
Third, if $n_{s,1}$ is large enough, we can select subsets $I_j$ and $I_b$ of the hidden neurons with small inputs so that we can use Thm.~\ref{thm:subsetsumExtended} on subset sum approximation to approximate the parameters of the target network.
The resulting subnetwork of $f_s$ is of the following form 
$f_{\epsilon',i}(x) =  \phi_t\left( \sum_{k \in I} w^{(2)}_{ik} \phi_0\left(w^{(1)}_{kj} x_j \right) + \sum_{k \in I_b} w^{(2)}_{ik} \phi_0\left(b^{(1)}_{k}\right)\right)$, where $I = \cup_j I_j$ and $\sum_{k \in I_j} w^{(2)}_{ik} w^{(1)}_{kj}$ can be used to approximate $w_{t,ij}$. 
$\lambda f_{\epsilon,i}$ qualifies as LT if 
\begin{align}
|f_{t,i}(x)- \lambda f_{\epsilon', i} (x)|  = |\phi_t(x^{(1)}_{t,i}) - \phi_t(x^{(1)}_{\epsilon',i}) | %
 \leq \omega_t\left( |x^{(1)}_{t,i} - x^{(1)}_{\epsilon',i}| \right) \leq \epsilon',     
\end{align}
where the last inequality holds if we can show that $|x^{(1)}_{t,i} - x^{(1)}_{\epsilon',i}| \leq \omega^{-1}_t \left(\epsilon'\right)$.
Thus, let us bound 
\begin{align}
\begin{split}
    |x^{(1)}_{t,i} - x^{(1)}_{\epsilon',i}| & = \left|\sum^{n_0}_{j=1} \left[ w_{t,ij} x_j - \sum_{k \in I_j}   w^{(2)}_{ik} \phi_0\left(w^{(1)}_{kj} x_j \right)\right]+ \left[b_{t,i} - \sum_{k \in I_b} w^{(2)}_{ik} \phi_0\left(b^{(1)}_{k}\right) \right] \right| \\
    & \leq n_0 \max_{x \in \mathcal{D}, j} \left| w_{t,ij} x_j - \sum_{k \in I_j}   w^{(2)}_{ik} \phi_0\left(w^{(1)}_{kj} x_j \right)\right| + \left|b_{t,i} - \sum_{k \in I_b} w^{(2)}_{ik} \phi_0\left(b^{(1)}_{k}\right) \right|. 
\end{split}
\end{align}
Thus, we have to ensure that each summand 
\begin{align}
    \left| w_{t,ij} x_j - \sum_{k \in I_j}   w^{(2)}_{ik} \phi_0\left(w^{(1)}_{kj} x_j \right) \right| \leq \frac{\omega^{-1}_t \left(\epsilon'\right)}{(n_0+1)}.
\end{align}
This also applies to the approximation of $b_{t,i}$, respectively.
To achieve this, we would like to approximate $\phi_0\left(w^{(1)}_{kj} x_j\right) \approx \mu_{\pm}(w^{(1)}_{kj} x_j) w^{(1)}_{kj} x_j$, where $\mu_{\pm}(x) = m_{+} x$ if $x \geq 0$ and $\mu_{\pm}(x) = m_{-} x$ otherwise.
This is valid, as $|w^{(1)}_{kj} x_j| \leq M a(\epsilon'')/M  = a(\epsilon'')$ by construction, as long as we pick $\epsilon''$ small enough.
We thus obtain
\begin{align}\label{eq:errors}
\begin{split}
       & \left| w_{t,ij} x_j - \sum_{k \in I_j}   w^{(2)}_{ik} \phi_0\left(w^{(1)}_{kj} x_j \right) \right| \\ \leq & \underbrace{\left| w_{t,ij} x_j - \sum_{k \in I_j} \mu_{\pm}(w^{(1)}_{kj} x_j)  w^{(2)}_{ik} w^{(1)}_{kj} x_j\right|}_{\leq \frac{\omega^{-1}_t \left(\epsilon'\right)}{2(n_0+1)} \text{ by subset sum approx.} }
     + \underbrace{\left|\sum_{k \in I_j}   w^{(2)}_{ik} \left( \phi_0\left(w^{(1)}_{kj} x_j \right) - \mu_{\pm}(w^{(1)}_{kj} x_j)  w^{(1)}_{kj} x_j \right) \right|}_{\leq \frac{\omega^{-1}_t \left(\epsilon'\right)}{2(n_0+1)} \text{ by activation funct. approx.}} %
    \end{split}
\end{align}
Let us first focus on the subset sum approximation.
It helps to split the index set $I_j$ into indices $I^{+}_{j}$ for which $w^{(1)}_{kj} > 0$ and $I^{-}_{j}$ all the ones for which $w^{(1)}_{kj} < 0$.
Furthermore, note that if $w^{(1)}_{kj} > 0$ it holds that $\mu_{\pm}(w^{(1)}_{kj} x_j)/(m_{+} + m_{-}) = |\mu_{\pm}(x_j)/(m_{+} + m_{-})| = 1- |\mu_{\pm}(-x_j)/(m_{+} + m_{-})|$.
In consequence, we have
\begin{align*}
      & \left| w_{t,ij} x_j -   \sum_{k \in I_j} \mu_{\pm}( w^{(1)}_{kj}x_j) w^{(2)}_{ik} w^{(1)}_{kj}  x_j \right |
       \leq M  \left| w_{t,ij} - \sum_{k \in I_j} \mu_{\pm}( w^{(1)}_{kj}x_j) w^{(2)}_{ik} w^{(1)}_{kj} \right | \\
     & \leq  M \frac{|\mu_{\pm}(x_j)|}{|m_{+} + m_{-}|}  \underbrace{\left| w_{t,ij} - |m_{+} + m_{-}| \sum_{k \in I^{+}_j} w^{(2)}_{ik}  w^{(1)}_{kj}  \right|}_{\leq \frac{\omega^{-1}_t \left(\epsilon'\right)}{2M(n_0+1)} }  \\
     & + M \left(1-\frac{|\mu_{\pm}(x_j)|}{|m_{+} + m_{-}|} \right)  \underbrace{\left| w_{t,ij}  - |m_{+} + m_{-}| \sum_{k \in I^{-}_j} w^{(2)}_{ik}  w^{(1)}_{kj}  \right|}_{\leq \frac{\omega^{-1}_t \left(\epsilon'\right)}{2M(n_0+1)} }%
\end{align*}
Thm.~\ref{thm:subsetsumExtended} can guaranty a small enough error with probability $1-\delta'$. 
As we have to solve two subset sum approximation problems per parameter and thus $2(n_0+1) n_{t,1}$ in total, we need to solve each of them successfully with probability at least $1-\delta'/(2(n_0+1)n_{t,1})$, which can be seen with the help of a union bound.
The random variables that are used in the approximation are given by $X_k =  |m_{+} + m_{-}| w^{(2)}_{ik}  w^{(1)}_{kj}$.
The initial scaling of the random variables is exactly chosen so that $X_k$ is given by the product of two uniform random variables $X_k \sim U[-1,1] U[0,1]$ or $X_k \sim U[-1,1] U[-1,0]$, which contain a normal distribution as shown by \citep{orseau2020logarithmic}.
Therefore, Thm.~\ref{thm:subsetsumExtended} states that if  
\begin{align}\label{eq:subset}
    n \geq C \log\left(\frac{n_0}{\min\left\{\delta'/n_{t,1} , \omega^{-1}_t \left(\epsilon'\right)/M  \right\}}\right)
\end{align}
random variables are available for each subset sum approximation and thus $n \geq C n_0 \log\left(n_0/\min\{\delta,   \omega^{-1}_t \left(\epsilon'\right)/M \}\right)$ in total, we can achieve the desired subset sum approximation error.

It is left to show how we can obtain the necessary activation function approximation in Eq.~(\ref{eq:errors}).
The relevant term vanishes for $a(\epsilon'') = \infty$ and the claim follows directly. 
In the following, we discuss therefore only the case that $a(\epsilon'')$ is finite.
\begin{align*}
\begin{split}
    & \left|\sum_{k \in I_j}   w^{(2)}_{ik} \left( \phi_0\left(w^{(1)}_{kj} x_j \right) - \mu_{\pm}(w^{(1)}_{kj} x_j)  w^{(1)}_{kj} x_j \right) \right| \leq \sum_{k \in I_j} \left|  w^{(2)}_{ik} \right|  \left| \phi_0\left(w^{(1)}_{kj} x_j \right) - \mu_{\pm}(w^{(1)}_{kj} x_j)  w^{(1)}_{kj} x_j \right| \\
    & \leq  \sum_{k \in I_j} \left|  w^{(2)}_{ik} \right| \epsilon'' \leq \frac{|I_j|}{ |m_{+} + m_{-}| \sigma} \epsilon'' \leq  C \log\left(\frac{n_0}{\min\left\{\delta , \omega^{-1}_t \left(\epsilon'\right)/M   \right\}}\right)\frac{ M}{ |m_{+} + m_{-}|} \frac{\epsilon''}{a(\epsilon'')}, 
    \end{split}
\end{align*}
where we have used that $\left|  w^{(2)}_{ik} \right|  \leq  \frac{1}{ |m_{+} + m_{-}| \sigma}$ with $\sigma = \frac{ a(\epsilon'')}{M}$ and the fact that the approximation of the activation function is valid with $\epsilon''$ by construction. 
Note that the required number of subset elements $|I_j|$ in the subset sum approximation is usually much smaller than the used upper bound of the whole set size as given by Eq.~(\ref{eq:subset}). 
To arrive at the end of the proof, we only have to choose $\epsilon''$ small enough so that the whole term is bounded by $\frac{\omega^{-1}_t \left(\epsilon'\right)}{2(n_0+1)}$.
Interestingly, this choice only affects the scaling in the initialization of the random variables but not directly our width requirement.
We can always find an appropriate $\epsilon''$ and accordingly also scaling factor $\sigma = \frac{ a(\epsilon'')}{M}$, as the function $g(\epsilon'') = \frac{\epsilon''}{a(\epsilon'')}$ is invertible on a suitable interval $]0, \epsilon']$.
We can therefore define 
\begin{align}
    \epsilon'' = g^{-1} \left( \frac{\omega^{-1}_t\left(\epsilon'\right)}{C n_0 \log\left(\frac{n_0}{\min\left\{\delta'/n_{t,1} , \omega^{-1}_t \left(\epsilon'\right)/M\right\}}\right)\frac{ M}{ |m_{+} + m_{-}|}} \right).
\end{align}
\end{proof}
We also need that $\lim_{\epsilon'' \rightarrow 0} g(\epsilon'') = 0$ to make sure that we can always find a suitable $\epsilon''$ for any choice of $\epsilon'$. 
Furthermore, note that, if $m_{+} = m_{-} = m$, we do not need to distinguish $w^{(1)}_{kj} > 0$ and $w^{(1)}_{kj} < 0$ to do separate approximations. 
In this case, we only need to solve $n_0+1$ subset sum approximation problems.

To give an example for $g$, let us recall that for $\tanh$ (and sigmoids) we have $a(\epsilon'') = C \epsilon''^{1/3}$.
In consequence, $g(\epsilon'') = \frac{\epsilon''}{a(\epsilon'')} = C \epsilon''^{2/3}$ and thus $g^{-1}(y) = C y^{3/2}$ so that $\epsilon''$ is of order $\epsilon'' = C \left(\epsilon'/\log(1/\epsilon')\right)^{2/3}$ in this case. 

\subsection{Proof of Thm.~\ref{thm:twononzero}}
\begin{theorem*}[LT Existence (Two-for-One) with Non-Zero Intercept]
Thm.~\ref{thm:twoconv} applies also to activation functions $\phi_0$ that fulfill Assumption~\ref{def:act} with $d \neq 0$ if the parameters are initialized according to Assumption~\ref{def:initlinear} with $M^{(l)}_0$ distributed as the weights in Thm.~\ref{thm:twoconv}.
\end{theorem*}
\begin{proof}
We can closely follow the steps of the previous proof.
The major difference is that we approximate $\phi_0\left(x\right) \approx \mu_{\pm}(x) x + d$.
This turns the activation function approximation in Eq.~\ref{eq:errors} into
\begin{align}
\begin{split}
       & \left| w_{t,ij} x_j - \sum_{k \in I_j}   w^{(2)}_{ik} \phi_0\left(w^{(1)}_{kj} x_j \right) \right|  \leq  \underbrace{\left| w_{t,ij} x_j - \sum_{k \in I_j} \mu_{\pm}(w^{(1)}_{kj} x_j)  w^{(2)}_{ik} w^{(1)}_{kj} x_j\right|}_{\leq \frac{\omega^{-1}_t \left(\epsilon'\right)}{2(n_0+1)} \text{ by subset sum approx.} } \\
     & + \underbrace{\left|\sum_{k \in I_j}   w^{(2)}_{ik} \left( \phi_0\left(w^{(1)}_{kj} x_j \right) - \mu_{\pm}(w^{(1)}_{kj} x_j)  w^{(1)}_{kj} x_j - d\right) \right|}_{\leq \frac{\omega^{-1}_t \left(\epsilon'\right)}{2(n_0+1)} \text{ by activation funct. approx.}} +   \left| d \sum_{k \in I_j}   w^{(2)}_{ik}\right|. %
    \end{split}
\end{align}
In principle, we could have modified the bias subset sum approximation by approximating $b_{t,i} + d \sum_j \sum_{k \in I_j}   w^{(2)}_{ik}$ instead of $b_{t,i}$.
Yet, $\sum_j \sum_{k \in I_j}   w^{(2)}_{ik}$ could be a large number, with which we would need to multiply our width requirement, if each $w^{(2)}_{ik}$ is initialized as in Thm.~\ref{thm:twoconv}.
In contrast, with the looks-linear initialization we can choose $w^{(2)}_{ik'} = - w^{(2)}_{ik}$ so that $\sum_{k \in I_j} w^{(2)}_{ik} = \sum_{k \in I^{+}_j}   w^{(2)}_{ik} + \sum_{k' \in I^{-}_j}   w^{(2)}_{ik'} = \sum_{k \in I^{+}_j}   w^{(2)}_{ik} - \sum_{k \in I^{+}_j}   w^{(2)}_{ik} = 0$.
The extra term vanishes and we only need to solve half of the subset sum approximation problems than in the previous theorem, i.e. only $(n_0+1)$, with probability $\delta'/(n_0+1)$. %
Thus, we could also bound 
\begin{align}
    n \geq C \log\left(\frac{n_0}{\min\left\{\delta , \omega^{-1}_t \left(\epsilon'\right)/(2M)  \right\}}\right)
\end{align}
with a smaller $C$ than in Thm.~\ref{thm:twoconv} but we do not derive the precise constant anyways.
\end{proof}
Note that even in the case $m_{+} = m_{-} = m$, if $d \neq 0$, we distinguish the cases $w^{(1)}_{kj} > 0$ and $w^{(1)}_{kj} < 0$ to do separate approximations, as this leads to a vanishing $\sum_{k \in I_j} w^{(2)}_{ik} = 0$.

\subsection{Proof of Lemma \ref{thm:errorprop}}
\begin{theorem*}[Error propagation]
Let two networks $f_1$ and $f_2$ of depth $L$ have the same architecture and activation functions with Lipschitz constant $T$.
Define $M_l := \sup_{x \in \mathcal{D}} \norm{\bm{x}^{(l)}_1}_{1}$. 
Then, for any $\epsilon > 0$ we have $\norm{f_1-f_2}_{\infty} \leq \epsilon$, if every parameter $\theta_1$ of $f_1$ and corresponding $\theta_{2}$ of $f_2$ in layer $l$ fulfils $|\theta_1 - \theta_2| \leq \epsilon_{l}$ for 
\begin{align*}%
\epsilon_l := \frac{\epsilon}{n_{l} L} \Big[T^{L-l+1} \left(1 + M_{l-1}\right) \left(1+\frac{\epsilon}{L}\right)
\prod^{L-1}_{k=l+1}\left(\norm{W^{(k)}_1}_{\infty} + \frac{\epsilon}{L}\right) \Big]^{-1}.
\end{align*}
\end{theorem*}
\begin{proof}
Using the Lipschitz continuity of the activation function, we obtain for each component of the difference between the two networks 
\begin{align}
\begin{split}
& |f_{1,i}(x)-f_{2,i}(x)| =  \left|\phi\left(\sum_{j} w^{(L)}_{1,ij} x^{(L-1)}_{1,j} + b^{(L)}_{1,i} \right) - \phi\left(\sum_{j} w^{(L)}_{2,ij} x{(L-1)}_{2,j} + b^{(L)}_{2,i} \right)\right| \\
& \leq T \left|\sum_{j} (w^{(L)}_{1,ij} x^{(L-1)}_{1,j} - w^{(L)}_{2,ij} x{(L-1)}_{2,j}) + b^{(L)}_{1,i}  - b^{(L)}_{2,i} \right|  \leq T \sum_j |w^{(L)}_{1,ij}-w^{(L)}_{2,ij}| |x^{(L-1)}_{1,j}| \\
& +  T \sum_j |w^{(L)}_{2,ij}| |x^{(L-1)}_{2,j}-x^{(L-1)}_{1,j}| + T |b^{(L)}_{1,i} - b^{(L)}_{2,i}|\\
& \leq T \Big[\epsilon_L \norm{x^{(L-1)}_{1}}_1 + (1 + \epsilon_L) \norm{x^{(L-1)}_{2}-x^{(L-1)}_{1}}_1 + \epsilon_L \Big]\\
& \leq T \Big[\epsilon_L \left(1+M_{l-1} \right)  + (1 + \epsilon_L) T \norm{W^{(L-1)}_2 x^{(L-2)}_{2} + b^{(L-1)}_{2} - W^{(L-1)}_1 x^{(L-2)}_{1} - b^{(L-1)}_{1}} \Big]\\
& \leq \epsilon_L \left(1+M_{l-1} \right)  + (1 + \epsilon_L) T^2 \Big[ \norm{W^{(L-1)}_2 \left(x^{(L-2)}_{2}- x^{(L-2)}_{1} \right)}_1 + \norm{\left(W^{(L-1)}_2 - W^{(L-1)}_1\right) x^{(L-2)}_{1}}_1 \\
& + \norm{b^{(L-1)}_{2}-b^{(L-1)}_{1}}_1 \Big] \leq \epsilon_L \left(1+M_{l-1} \right)  + (1 + \epsilon_L) T^2  \Big[  \left(\norm{W^{(L-1)}_1}_{\infty} + n_{L-1} \epsilon_{L-1} \right) \\
& \times \norm{x^{(L-2)}_{2}- x^{(L-2)}_{1}}_1 + n_{L-1} \epsilon_{L-1} \norm{x^{(L-2)}_{1}}_1 + n_{L-1} \epsilon_{L-1} \Big]\\
& \leq \epsilon_L \left(1+M_{l-1} \right)  + (1 + \epsilon_L) T^2 n_{L-1} \epsilon_{L-1} (1+M_{L-2}) + (1 + \epsilon_L) T^2 \Big(\norm{W^{(L-1)}_1}_{\infty}+ n_{L-1} \epsilon_{L-1} \Big) \\
&  \times \norm{x^{(L-2)}_{2}- x^{(L-2)}_{1}}_1, 
\end{split}
\end{align}
where we have assumed that each $|w^{(L)}_{1,ij}| \leq 1$. 
The above lemma further assumes that each parameter in layer $L$ of network $2$ is maximally $|w^{(L)}_{2,ij}| \leq |w^{(L)}_{1,ij}| + \epsilon_L \leq 1 + \epsilon_L$. 
In addition, it claims that $\norm{x^{(L-1)}_{1}}_1 \leq M_{L-1}$.
Repeating the above arguments iteratively for $\norm{x^{(l)}_{2}-x^{(l)}_{1}}_1$, we arrive at the following bound
\begin{align}
\begin{split}
|f_{1,i}(x)-f_{2,i}(x)| \leq & T\epsilon_L (1+M_{L-1}) \\
& + \sum^{L-1}_{l=1} T^{L-l+1} (M_{l-1} + 1) n_l \epsilon_l (1+\epsilon_L) \prod^{L-1}_{k=l+1}\left(\norm{W^{(k)}_1}_{\infty} + n_k \epsilon_k\right).
\end{split}
\end{align}
We have to ensure that this expression is smaller or equal to $\epsilon$.
This can by achieved by assigning to each term that is related to a layer $l$ the maximum error $\epsilon/L$.
It follows that also $\epsilon_l \leq \epsilon/(L n_l)$ so that
\begin{align}
    &  T^{L-l+1} (M_{l-1} + 1) n_l \epsilon_l (1+\epsilon_L) \prod^{L_1}_{k=l+1}\left(\norm{W^{(k)}_1}_{\infty} + n_k \epsilon_k\right) \\
    & \leq  T^{L-l+1} (M_{l-1} + 1)  n_l \epsilon_l \left(1+\frac{\epsilon}{L}\right)
    \prod^{L_1}_{k=l+1}\left(\norm{W^{(k)}_1}_{\infty} + \frac{\epsilon}{L}\right)
    \leq \frac{\epsilon}{L}
\end{align}
Solving the last inequality for $\epsilon_l$ proves our claim.
\end{proof}

\subsection{Proof of Theorem~\ref{thm:LTexistFull}}
\begin{theorem*}[LT existence ($L+1$ construction)]
Assume that ${\epsilon, \delta \in {(0,1)}}$, a target network $f_t(x): \mathcal{D} \subset \mathbb{R}^{n_0} \rightarrow \mathbb{R}^{n_{L}}$ with architecture $\bar{n}_t$ of depth $L$, $N_t$ non-zero parameters, and a source network $f_s$ with architecture $\bar{n}_0$ of depth $L+1$ are given.
Let $\phi_t$ be the activation function of $f_t$ and the layers $l \geq 2$ of $f_s$ with Lipschitz constant $T$, $\phi_0$ be the activation function of the first layer of $f_s$ fulfilling Assumption~\ref{def:act}, and $M := \max\{1, \max_{\mathbf{x} \in \mathcal{D}, l} \norm{\mathbf{x^{(l)}_t}}\}$. 
Let the parameters of $f_{s}$ be conveniently initialized according to Assumption~\ref{def:initconv} for $l \geq 2$ and Thm.~\ref{thm:twoconv} or \ref{thm:twononzero} for $l \leq 1$. 
Then, with probability at least $1-\delta$, $f_{s}$ contains a subnetwork $f_{\epsilon} \subset f_{s}$ so that each output component $i$ is approximated as $\max_{\bm{x}\in \mathcal{D}} \left|f_{t,i}(\bm{x})- f_{\epsilon',i}(\bm{x}) \right| \leq \epsilon$ %
if for $l \geq 1$
\begin{align*}
  n_{s,l+1} \geq  C n_{t,l} \log\left(\frac{1}{\min\{\epsilon_{l+1}, \delta/\rho \} }\right), %
\end{align*}
where $\epsilon_{l+1}$ is defined by Lemma~\ref{thm:errorprop} and $\rho = C N^{1+\gamma}_t \log(1/\min\{ \min_l \epsilon_{l}, \delta \})$ for any $\gamma > 0$.
Furthermore, we require $n_{s,1} \geq  C n_{t,1} \log\left(\frac{1}{\min\{\epsilon_{l+1}, \delta/\rho \} }\right)$. 
\end{theorem*}
\begin{proof}
The proofs of Thms.~\ref{thm:twononzero}, \ref{thm:twoconv}, and \ref{thm:singleone} have already explained the main parts of the construction.
The missing piece is the choice of appropriate modification of $\delta$ by $\rho \geq \rho' = \sum^L_{l=1} \rho'_l$, where $\rho'$ counts the increased number of required subset sum approximation problems to approximate the $L$ target layers with our lottery ticket and $\rho_l$ counts the same number just for Layer $l$. 

For each non-zero parameter, we will need two solve at least one subset sum approximation problem or sometimes two in case of the first target layer.
We denote the number of non-zero parameters in Layer $l$ as $N_l$. 
Thus, if our target network is fully-connected and all parameters are non-zero, we have $N_l=n_{t,l} (n_{t,l-1}+1)$ and in total $N_t = \sum^L_{l=1} n_{t,l} (n_{t,l-1}+1)$.

Let us start with counting the number $\rho'_L$ of required subset sum approximation problems in the last layer because it determines how many neurons we need in the previous layer. 
This in turn defines how many subset sum approximation problems we have to solve to construct this previous layer.

The last layer requires us to solve exactly $\rho'_L = N_L$ subset sum problems, which can be solved successfully with high probability if $n_{s,L-1} \geq C n_{t,L-1} \log(1/\min\{\epsilon_L, \delta/\rho'\})$.
We will only need to construct a subset of these neurons with the help of Layer $L-2$, i.e., exactly the neurons that are used in the lottery ticket. 
If $n_{s,L-1}$ is large, this might require only $2-3$ neurons per parameter.
For simplicity, however, we bound this number by the total number of available neurons.
To reconstruct one set of neurons, we need approximate $N_{L-1}$ parameters.
As we have to maximally construct $C\log(1/\min\{\epsilon_L, \delta/\rho'\})$ sets of these neurons, we can bound $\rho'_{L-1} \leq C N_{L-1} \log(1/\min\{\epsilon_L, \delta/\rho'\}$.

Note that we can solve all of these subset sum approximation problems with the help of $n_{t,L-2} \geq C N_{L-2} \log(1/\min\{\epsilon_{L-1}, \delta/\rho'\}$ neurons and this number does not increase by the fact that we have to construct not only $n_{t,L-1}$ neurons but a number of neurons that is increased by a logarithmic factor.
The higher number of required neuron approximations only affects the number of required subset sum approximation problems and thus the needed success probability of each parameter approximation via $\rho$.

Repeating the same argument for every layer, we derive $\rho'_{l} \leq C N_{l} \log(1/\min\{\epsilon_{l+1}, \delta/\rho'\}$, which could also be shown formally by induction.
In total we thus find $\rho' = \sum^L_{l=1} \rho'_l \leq C N_{l}\log(1/\min\{ \min_l \epsilon_{l}, \delta/\rho' \}) \leq C N_t \log(1/\min\{ \min_l \epsilon_{l}, \delta/\rho \})$.
A $\rho$ that fulfills $ \rho \geq C N_t \log(1/\min\{ \min_l \epsilon_{l}, \delta/\rho \})$ would therefore be sufficient to prove our claim.
$\rho = C N^{1+\gamma}_t \log(1/\min\{\epsilon, \delta \})$ for any $\gamma > 0$ works, as $C N^{\gamma}_t \geq \log(N_t)$.
\end{proof}

\end{document}